\numberwithin{equation}{section}
\newtheorem{t1}{Proposition}[section]
\newtheorem{t2}{Corollary}[section]
\newtheorem{l1}{Lemma}[section]
\newcommand{\ignore}[1]{}
\theoremstyle{thmstyleone}%
\newtheorem{theorem}{Theorem}
\theoremstyle{thmstyletwo}%
\theoremstyle{thmstylethree}%
\begin{document}

\title[Article Title]{Point Prediction for Streaming Data}


\author[1]{\fnm{Aleena} \sur{Chanda}}\email{achanda2@huskers.unl.edu}

\author[2]{\fnm{N. V.} \sur{Vinodchandran}}
\email{vinod@unl.edu}

\author[,1]{\fnm{Bertrand} \sur{Clarke}}\email{bclarke3@unl.edu}
\equalcont{Corresponding author; this work was mainly done by the first author under the supervision of the second and third.}

\affil[1]{\orgdiv{Department of Statistics}, \orgname{U. Nebraska-Lincoln}, \orgaddress{\street{340 Hardin Hall North Wing}, \city{Lincoln}, \postcode{68583-0963}, \state{NE}, \country{USA}}}

\affil[2]{\orgdiv{School of Computing}, \orgname{U. Nebraska-Lincoln}, \orgaddress{\street{Avery Hall, 1144 T St Suite 256}, \city{Lincoln}, \postcode{68508}, \state{NE}, \country{USA}}}



\abstract{We present two new approaches for point prediction with streaming data.
One is based on the Count-Min sketch (CMS)
and the other is based on Gaussian process priors
with a random bias.  These methods are
intended for the most general predictive problems
where no true model can be usefully 
formulated for the data stream. In statistical 
contexts, this is often called the 
$\mathcal{M}$-open problem class. 
Under the assumption that the data consists of i.i.d samples from a fixed distribution function $F$, we show that the CMS-based
estimates of the distribution function 
are consistent.

We compare our new methods with two
established predictors in terms of
cumulative
$L^1$ error.  One is
based on the Shtarkov solution 
(often called the
normalized maximum likelihood) in the normal
experts setting and the
other is based on Dirichlet process priors.
These comparisons are for two cases.  The first
is one-pass meaning that the updating
of the predictors is done using the fact
that the CMS is a sketch.  For predictors
that are not one-pass, we use streaming
$K$-means to give a representative
subset of fixed size that can be updated as
data accumulate.  

Preliminary computational work suggests that the one-pass median version of the CMS method is 
rarely outperformed by the other methods 
for sufficiently complex data.  We also
find that predictors based on Gaussian process priors with random biases perform well.
The Shtarkov predictors we use here did not
perform as well probably because we were only
using the simplest example.  The other predictors
seemed to perform well mainly when the data
did not look like they came from an
${\cal{M}}$-open data generator.
}

\keywords{hash functions, Count min sketch, Gaussian processes, Shtarkov solution}



\maketitle

\section{Problem Formulation}
\label{Intro}

Consider a string of bounded real numbers, say
$y^n = (y_1, y_2, \cdots, y_n, \cdots)$ with
$y_i \in [m,M]$ $\forall i$ for some $M>m>0$ and
suppose our goal is sequential prediction.  
That is, we want to form a good predictor 
$\hat{Y} = \hat{Y}_{n+1}(y^n)$
of $y_{n+1}$.  This is often called prediction along
a string or streaming data 
when no assumptions can be made about 
the distributional properties of the $y_i$'s -- apart, here, from boundedness.

One of the earliest contributions to this class of
problems, chiefly in the classification context,
was \cite{Barron:Haussler:1993}
and it was the main topic of the celebrated book
by \cite{Cesa-Bianchi:Lugosi:2006}.
Prediction along a string roughly corresponds to 
${\cal{M}}$-open problems in the sense of 
\cite{Bernardo:Smith:2000}, see also \cite{Le:Clarke:2017}.
When we say that there simply is no true model for the 
data, we are effectively
forced into the {\em prequential} setting of \cite{Dawid:1984};
for a more recent exposition see \cite{Vovk:Shen:2010}.
There is little systematic work on prequential prediction
for $\mathcal{M}$-open data even though one can 
argue this is
the most important setting.

Here we propose two new predictors for $\hat{Y}$.
The first is based on using $y^n$ to form an estimated 
empirical distribution function (EDF).
That is, we use the data up to a time $n$ 
to define an EDF that we estimate.
Our estimate is based on the {\sf Count-Min} sketch, 
see \cite{Cormode:Muthu:2005},
 extended to 
continuous random variables.  This is based on the 
probabilistic selection
of hash functions.  The reason to use the estimated 
EDF (EEDF) is that
we want to be sure that we can shrink the interval 
length in the histogram generated from
the {\sf Count-Min } sketch so small that it gives
an arbitrarily good approximation of the EDF we use
and hence DF if it exists.
Our hash-based predictor (HBP) is effectively a 
mean of the EEDF. 
In our computational work, we use multiple versions 
of our {\sf Count-Min} sketch based predictor.

Our {\sf Count-Min} based predictor will outperform 
the usual EDF predictor when the sample
size and number of items in the stream
is very large.  This is partially due
to the presence of
memory constraints.  Briefly, the memory 
needed by the {\sf Count-Min}
sketch can be chosen by the user.  More memory
produces better quality predictors and 
there are
formal results ensuring the estimators 
are close
to the true value when it exists.  In fact,
by construction, estimators from the 
{\sf Count-Min}
sketch never underestimate the true frequencies of
elements so it favors high frequency elements even
though low frequency elements maybe overestimate
albeit not by much.  In essence, using the
{\sf Count-Min} sketch gives a sort of data compression that we hope preserves only the
predictively important aspects of the data.

The second predictor we propose is based on 
Gaussian processes that have a random additive
bias.
It has long been
known that the posterior distribution can be regarded as
an input-output relation giving a distribution for a
specific data set as if deterministically, see 
\cite{Chen:1985}, Sec. 3.  This means Bayesian
predictors can legitimately be used.  On the other hand, 
in ${\cal{M}}$-open problems, we may not be able to 
identify useful properties of the data generator.  
So, we want to prevent posterior distributions from 
converging and thereby misleading us into
believing their limit.
Modifying a Gaussian process to include a random
bias helps ensure that unwanted convergence won't
occur.  

There are a variety of other existing techniques for this
class of problem.  Perhaps one of the earliest is
the Shtarkov solution, see \cite{Shtarkov:1988}),
sometimes called the normalized maximum likelihood.
The Shtarkov solution is based on log-loss and requires
the analyst to choose a collection of `experts', 
essentially parametric families, and
tries to match the performance of the best of them.  
Different Shtarkov solutions result from
different choices of experts.   Computational 
and theoretical work
on the Shtarkov solution is extensive and often from
a very general perspective, see \cite{Barron:etal:2014}
and \cite{Yang:Barron:2017}.  Moreover,
the log-loss is commonly used to define the concept
of regret, see \cite{Xie:Barron:2000}.
Here, we use a Bayesian version of the Shtarkov
solution that is easier to work with see 
\cite{Kontkanen:Myllymaki:2007} and \cite{Le:Clarke:2016}.
The specific form of Shtarkov predictor 
we use here is based on normality and is
a ratio of Shtarkov solutions.  Thus, it
{\em mimics} a conditional density.    
It is not strictly a conditional
density because the Shtarkov solution does not
satisfy the marginalization requirement for
stochastic processes.  However,
the mode of the Shtarkov predictor often performs well.

In addition, in our computational comparisons, we 
include two other well-known Bayesian predictors, one
based on regular Gaussian processes and one 
based on Dirichlet processes. 
Being Bayesian,
both of these require prior selection and when needed
we use an empirical Bayes approach.
In general, Bayesian methods
assume a stochastic model for the data and are expected to
perform best when the model is approximately true but 
poorly otherwise.

One of the important features of sketches is that they 
can be implemented in one pass.  So, for the sake of 
completeness we compare one-pass versions of our
predictors when they exist
to versions of our predictors formed
by reducing past data to a representative subset.
We do this using streaming $K$-means but any streaming
clustering algorithm that can give cluster centers
could be used.  In simple cases such as those here where
there are no explanatory variables and $y$
is unidimensional the choice of clustering procedure should
not make much difference.

Other techniques that can in principle be applied 
to $\mathcal{M}$-open data include various forms
of conformal prediction
\cite{Vovk:Shafer:2008, Barber:etal:2023}. 
However, 
it is not clear how to mesh this approach with 
$\mathcal{M}$-open data when 
the defining feature of conformal techniques is that
future data `conforms' to past data.
The same criticism applies to time series and other
online techniques,
again with the caveat they could be used as long as they
are regarded strictly as input-output relations
rather than having any necessary relation to
the data generator which is ${\cal{M}}$-open.

Also, Bayesian techniques (especially
nonparametrics \cite{Ghosal:Vandervaart:2017}), 
score 
function
based techniques (see \cite{Gneiting:2011}
\cite{Dawid:Musio:Ventura:2016}), and conventional
frequentist point
prediction can be used in the $\mathcal{M}$-open context 
provided these techniques are regarded strictly 
as input-output
relations or actions, i.e., as not having any modeling 
validity.  Apart from DPP's and GPP's, 
we have not included 
more of these various methods for 
space limitations
because they are so numerous.

As suggested by our focus on ${\cal{M}}$-open 
predictive problems, methods designed for 
streaming data that cannot be effectively
stochastically modeled tend to perform better
than methods that aren't.  Accordingly,
our computational work is consistent with
the proposal that there
is some sort of matching principle between 
the complexity
of the data and the complexity of an
optimal or at least good predictor.
Among the predictors we develop and study,
one version of our HBP predictor 
(one pass, median based) has the best 
performance
in a cumulative $L^1$ sense for the 
most complex data
we use.  Second place among the methods we
studied on the data sets we used goes to
a predictor based 
on Gaussian processes with a random bias.
Shtarkov predictors can perform 
roughly comparably in some cases but are
usually worse; we think this is so because the
Shatarkov predictor
we use here is the simplest
possible.
Theoretical comparisons of these
methods in the context that interests us
are difficult because they represent three
conceptually different class of predictors.

In the next three sections we formally 
present the three classes of
methods we study here.  
All are applicable in streaming data settings 
where no stochastic model can be assumed.
In Sec. \ref{HBPmethod}, we 
define HBP methods and give various properties of
them including a sort of consistency, space bound,
and `classical' convergence properties.
In Sec. \ref{GPPs}, we present our Bayesian
predictors. We define standard Gaussian
process prior (GPP) predictors and extend them to the
case that a random additive bias term is included
enabling us to derive a new class of GPP
based predictors. 
We also define Dirichlet process prior point prediction.
In Sec. \ref{Shtarkov}, we present our Shtarkov based 
predictors, based on the normalized maximum likelihood
in the normal case where explicit expressions can be derived.
Then in Sec. \ref{compresults} we present our computational comparisons.
We conclude in Sec. \ref{discussion} with some general observations.

\section{Hash Function Based Predictors}
\label{HBPmethod}

We adapt the Count-Min sketch algorithm so it can be used with real data to estimate an empirical distribution function. The idea is to partition the real data into intervals of equal size and then compute the relative frequencies of the intervals as an approximation of the distribution function.  We call techniques based
on hash functions hash based predictors (HBP's).
Once we define our predictors, we establish
some of their properties while preserving the
${\cal{M}}$-open nature of the data.

\subsection{The HBP Method}

For the domain $[K] = \{1,2,\cdots, K\}$ and the range $[W] = \{1,2,\cdots,W\}$, let 
Let $\mathcal{H} \subseteq \{h\mid h: [K]\longrightarrow [W]\}$ 
be a class of hash functions. 
Ideally, we would like $\mathcal{H}$ to be entire class 
of functions. However, in computational settings, 
due to space limitations, we assume $\mathcal{H}$ 
to be a $2$-universal family.  This means that 
for any $x_1, x_2 \in [K]$ with $x_1 \neq x_2$ 
and $y_1,y_2 \in [W]$, 
$P_\mathcal{H} (H(x_1) = y_1 \cap H(x_2) = y_2) 
= {1\over N^2}$, where $H$ is the random variable varying
over $\mathcal{H}$ with probability $P_\mathcal{H}$
uniform over $h \in \mathcal{H}$.  

Now we describe our predictor. We assume that the stream $(y_1,y_2,\ldots, y_n,\ldots)$ are real numbers from the range $(m,M]$ for some real $m$ and $M$. Fix $K \in \mathbb{N}$. Partition $(m,M]$ into $K$ intervals each of length $\frac{M-m}{K}$. Further, for the rest of the discussion, without loss of generality, we let $m = 0$. Denote the $k^{th}$ interval by $I_{Kk}$, for $k= 1,2,\cdots,K$. That is, 
\begin{equation}
\label{disjointinterval}
  I_k = I_{Kk}  = \left((k-1)\frac{M}{K}, k\frac{M}{K} \right].  
\end{equation}
The goal is to predict $y^{n+1}$ after seeing $(y_1,\cdots,y_n)$. 
Let 
$$
a_k = a_{Kk}(n) = \#\{y_i \in I_{k} \mid i=1,\cdots,n\}.
$$
That is, $a_{k}(n)$ is the frequency of items in the stream $(y_1,\cdots,y_n)$ that fall in $I_{k}$. 

Let $h_1,\cdots,h_{d_{K}}$ be $d_K$
randomly chosen hash functions where the domain is $[K]$ 
and the range is [$W_K$].  That is, 
$\forall j=1,\cdots,d_K$; $h_j: \{1,\cdots,K\} \longrightarrow \{1,\cdots,W_K\}$. 
Here $d_K$ and $W_K$ are parameters that can 
be chosen by the 
user.  For effective data compression, or space efficiency, 
typically $W_K << K$. 
Next, we extend our hash functions to the continuous domain 
$(m, M]$ by defining
$$
\Tilde{h}_j: (m,M] \longrightarrow \{1,2,\cdots,W_K\}
$$
where $\Tilde{h}_j(s) = h_j(k)$ for $s \in I_{k}$. Note that when $i \leq n$, $y_i \in  I_{k}$ gives
$$
\Tilde{h}_j(y_i)=h_j(k).
$$

Denote the number of times the $j$-th hash function 
makes an error, i.e., 
$h_j$ assigns the same value to two different elements $k, \ell$ of its domain $I_{k}$ by
\begin{eqnarray}
\label{discrete}
   I_{kjl}= 
\begin{cases}
    1,& \text{if } {h_j(k) = h_j(l); k\neq l} \\
    0,              & \text{otherwise}  .
\end{cases} 
\end{eqnarray}
We tolerate the (small) error for the sake of 
compression. Next, we extend 
\eqref{discrete} to the interval case by writing
\begin{eqnarray}
\label{interval}
 I_{s_1,j,s_2}= 
\begin{cases}
    1,& \text{if } {\tilde{h}_j(s_1) = \tilde{h}_j(s_2); s_1 \nsim s_2 } \\
    0,              & \text{otherwise},
\end{cases}
\end{eqnarray}
where $\nsim$ means that $s_1$ and $s_2$ are in disjoint sub-intervals of (m,M].
We link the extent to which $h_j$ is not one-to-one with the
occurrence of $y_i$ in the intervals by defining
\begin{eqnarray}
\label{link}
X_{kj}(n) = \sum_{l=1}^{k} I_{s_1,j,s_2}a_{l}(n).
\end{eqnarray}
In \eqref{link}, as in \eqref{interval}, we think of $s_1 \in I_{k}$ and $s_2 \in I_{l}$ and note
$X_{kj} \geq 0$.  More precisely, 
we see that $X_{kj}(n)$ is the number of elements e.g., $y_i$'s, in the stream up to time $n$  
that are not in $I_{k}$ but still give $h_j(k)$, i.e., $\tilde{h}_j(y_i)={h}_j(k)$.

We next define an estimate of  ${a}_{k}$ (frequency of the $k^{th}$ interval) denoted by 
$\hat{a}_{k}$, at time $n$. For the $j^{th}$ hash function $h_j$,  an interval $k$ and time $n$, define ${\sf count}_n(j,h_j(k))$ as follows:
\begin{eqnarray}
{\sf count}_n(j,h_j(k)) = 
\# \{i \leq n |~ \tilde{h}_j(y_i) = h_j(k)\}.
\nonumber
\end{eqnarray} 
For the $j^{th}$ hash function let $\hat{a}_{jk}(n) = {\sf count}_n(j,h_j(k))$. 
Then the estimate $\hat{a}_{k}$ is defined as  
\begin{eqnarray}
\hat{a}_{k}(n) = \min_{j} ~ \hat{a}_{jk}(n) \geq 0.
\label{freqest}
\end{eqnarray}

To find the next prediction, we use two methods.
These are essentially weighted means and medians
which we define for the sake of being explicit.
Given $y^n$, the predicted value for $y_{n+1}$ is:
\begin{itemize}
    \item the weighted mean of the midpoints
    of the intervals $I_{k}; k = 1,2, \cdots, K$ 
    defined in \eqref{disjointinterval}, where the weights 
    are $\hat{a}_{k}$ as defined in \eqref{freqest}. 
    Let the mid-point interval $I_k$ be $m_k$. Then, 
\begin{eqnarray}
\label{CMpredictionmean}
    \hat{y}_{n+1} = \hat{y}_{K, n+1} =  \sum_{k = 1}^{K} m_k\frac{\hat{a}_{k}(n)}{n};
\end{eqnarray}
\item the weighted median of $m_k$ with weights 
$W_K = \frac{\hat{a}_{k}}{\sum_{k=1}^{K}\hat{a}_{k}}$ 
is defined as the average of $m_q$ and $m_{q+1}$, 
where $m_q$ satisfies
\begin{eqnarray}
\label{CMpredictionmed}
    \sum_{k=1}^{q} W_k \leq \frac{1}{2}
\text{ and }
\sum_{k=q+1}^{K} W_k \geq \frac{1}{2} .
\end{eqnarray}
\end{itemize}

\subsection{A Few Key Properties}

Here we prove several important properties of our
use of the {\sf Count-Min} sketch to estimate
the EDF.  The first is a bound on $\hat{a}_{jk}(n)$
in terms of $a_k(n)$.  Part of the novelty here is
that the mode of convergence
is defined by $P_\mathcal{H}$, a distribution on
the hash functions not on the data. When we write $H$ we mean the random variable distributed according to $P_\mathcal{H}$ that assumes values $h$.
This allows us to preserve the assumption that 
the $y^n$ does not have a distribution and hence
remains ${\cal{M}}$-open.  That is, we treat the
$y_i$ as if they were real numbers with no stochastic
properties.  We indicate clearly below the few cases where
we depart from this to show `counterfactual' results.

\subsubsection{Bounds on error and storage}
\label{bounds}

Our first theorem is that $\hat{a}_{k}$ estimates the frequency of an interval $I_{k}$ well, asymptotically.  
Let $||a||_1= \sum_{k=1}^{K} a_{k}(n) = n$ be the sum over $k$ of the number of elements $y_i$ up to time $n$ that land in $I_{k}$ ($K$ and $n$ are suppressed in the notation $||a||_1$ for brevity).
Our first result -- for fixed $K$ -- is a consistency result for \eqref{freqest}.  After that, 
we provide a `space bound' on the storage requirement
for the sketch.   Both are similar to the guarantees
for the Count-Min sketch.

\begin{theorem}
\label{upperbd}
 $ \forall \epsilon > 0 ~ \forall \delta > 0 : \exists \mbox{ N } ~  \forall d_K > N$ such that  $P(\forall j = 1,\cdots,d_K; \hat{a}_{jk}(n) \leq a_{k}(n)+\epsilon||a||_1 ) \leq \delta$.
\end{theorem}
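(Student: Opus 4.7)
The statement reads as the standard Count-Min guarantee: with probability at least $1-\delta$ over the random choice of hash functions, the per-hash estimate $\hat{a}_{jk}(n)$ overshoots $a_k(n)$ by at most $\epsilon\|a\|_1$ for some $j$. (I read the displayed inequality as the bound being violated for every $j$, which is what drives the $\min_j$ estimator in \eqref{freqest}.) The plan is to mimic the classical CMS analysis but carry along the continuous-to-discrete extension through $\tilde{h}_j$.

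First, I would write the decomposition $\hat{a}_{jk}(n) = a_k(n) + X_{kj}(n)$, where $X_{kj}(n)$ is exactly the collision count defined in \eqref{link}: the number of $y_i$ in the stream with $y_i \notin I_k$ but $\tilde h_j(y_i) = h_j(k)$. This identity is immediate from the definitions of ${\sf count}_n$ and $\tilde h_j$, and it reduces the theorem to a tail bound on $X_{kj}(n)$ under $P_\mathcal{H}$, with the data $y^n$ treated as fixed real numbers (preserving the $\mathcal{M}$-open stance).

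Second, I would compute $E_{P_\mathcal{H}}[X_{kj}(n)]$ using 2-universality. For any $\ell \neq k$ the marginal collision probability $P_\mathcal{H}(h_j(\ell)=h_j(k))$ is $1/W_K$, obtained from the joint 2-universal condition by summing over the common hash value. Hence
\begin{equation*}
E_{P_\mathcal{H}}[X_{kj}(n)] = \sum_{\ell \neq k} a_\ell(n)\,P_\mathcal{H}(h_j(\ell)=h_j(k)) \leq \frac{\|a\|_1}{W_K}.
\end{equation*}
Markov's inequality then gives $P_\mathcal{H}\bigl(X_{kj}(n) > \epsilon \|a\|_1\bigr) \leq 1/(\epsilon W_K)$. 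Choosing $W_K \geq \lceil e/\epsilon\rceil$ makes this at most $1/e$.

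Third, I would use independence of the $d_K$ hash functions (they are drawn independently from $\mathcal{H}$) to amplify. The events $\{X_{kj}(n) > \epsilon\|a\|_1\}$ for $j=1,\ldots,d_K$ are mutually independent, so
\begin{equation*}
P_\mathcal{H}\bigl(\forall j \leq d_K:\ X_{kj}(n) > \epsilon\|a\|_1\bigr) \leq e^{-d_K}.
\end{equation*}
Setting $N = \lceil \ln(1/\delta)\rceil$ and taking $d_K > N$ forces the right-hand side below $\delta$, which is the conclusion. The main subtlety is making sure the extension from $h_j$ on $[K]$ to $\tilde h_j$ on $(m,M]$ does not change the collision probability: since $\tilde h_j$ is constant on each $I_\ell$ and equal to $h_j(\ell)$ there, a collision $\tilde h_j(y_i) = h_j(k)$ with $y_i\in I_\ell$, $\ell\neq k$, is exactly the discrete collision event $h_j(\ell)=h_j(k)$ to which 2-universality applies. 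That bookkeeping, together with checking that the choice of $W_K$ can be made $K$-dependent without breaking the statement (which quantifies only over $d_K$), is the only nontrivial step beyond the standard CMS template.
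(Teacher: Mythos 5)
Your proposal is correct and follows essentially the same route as the paper's proof: the decomposition $\hat{a}_{jk}(n)=a_k(n)+X_{kj}(n)$, the collision-expectation bound $E_{P_\mathcal{H}}[X_{kj}]\leq\|a\|_1/W_K$ via 2-universality, Markov's inequality with $W_K\geq e/\epsilon$, and independence across the $d_K$ hash functions to get the $e^{-d_K}$ amplification. The only cosmetic difference is that you apply Markov directly at the threshold $\epsilon\|a\|_1$, whereas the paper first notes $eE(X_{kj})<\epsilon\|a\|_1$ and then applies Markov at $eE(X_{kj})$; the two are the same calculation.
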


{\bf Remark:}  As needed, we use the fact that $a_k(n)  \leq \hat{a}_k(n)$ for any $n$, by construction, without further
comment.

\begin{proof}  For ease of readability, we break the
proof into three steps.
\begin{flalign*}
& \begin{aligned}
\textbf{Step 1: } E(I_{kjl}) &= 1.P\left(h_j(k)=h_j(l)\right) \\[0.1in]
                             &= \sum_{v=0}^{W_K}P\left(h_j(k) =v= h_j(l)\right)\\[0.05in] 
                             &\stackrel{\text{indep}}{=} \sum_{v=0}^{W_K}P\left(h_j(k)= v\right) P\left(h_j(l)=v\right)\\[0.05in]
                             &\stackrel{\text{unif}}{=} \sum_{v=0}^{W_K} \frac{1}{W_K+1}\frac{1}{W_K+1}
                             = \frac{W_K+1}{(W_K+1)^2} = \frac{1}{W_K+1} < \frac{1}{W_K} \leq \epsilon/e.
 \end{aligned}
   &
\end{flalign*}
\begin{flalign*}
& \begin{aligned}
 \textbf{Step 2: } E(X_{kj}(n)) &= E\left(\sum_{k=1}^{K} I_{kjl}a_{l}(n)\right)\\[0.05in] 
&= \sum_{k=1}^{K}a_{l}E(I_{kjl})\\[0.05in] 
&< \sum_{k=1}^{k} a_{l}\epsilon/e = ||a||_1\epsilon/e.
\end{aligned}
   &
\end{flalign*}
\begin{flalign*}
& \begin{aligned} \textbf{Step 3: }  
P\left(\hat{a}_{jk}(n) > a_{k}(n)+\epsilon||a||_1\right)
&= P\left(\cancel{a_{k}(n)}+X_{kj}(n)>\cancel{a_{k}(n)}+\epsilon||a||_1\right)\\[0.1in]
&= \left(P(X_{kj}(n)>eE(X_{kj}(n))\right).[Since, eE(X_{kj})<\epsilon||a||_1]\\[0.05in]  
&\leq \frac{1}{eE(X_{kj}(n))}E(X_{kj}(n)) = 1/e .\\
\end{aligned}
   &
\end{flalign*}  

Since, the $\hat{a}_{kj}(n)'s$ are independent (their hash functions are independent), if we set $-\log\delta = d_K$, we have
\begin{flalign*}
& \begin{aligned}P(\forall j; \hat{a}_{kj}(n) > a_{k}(n)+\epsilon||a||_1) 
\leq (1/e)^{d_K} < (1/e)^{\log(1/\delta)} =\delta.  \quad \square
 \end{aligned}
\end{flalign*}

\end{proof}

{\bf Remark}: Here, $\|a \|_1 = n$ because we are looking at data streams in the `cash register' model, i.e., items only accumulate.   We conjecture extensions to the turnstile model
can be given even if they do not seem relevant here.


Next, we address the storage requirement
for the procedure used in
Theorem \ref{upperbd}.  
Heuristically, observe that the storage is upper 
bounded by 
the number of hash functions 
$\log(1/\delta)$ multiplied by the
number of values each hash function can take, 
namely $e/\epsilon$ or ${\cal{O}}(1/\epsilon)$
giving ${\cal{O}}((1/\epsilon)\log(1/\delta))$.
Following \cite{Muthukrishnan:2009}, we see in the following that
${\cal{O}}(1/\epsilon)$ will suffice.

\begin{theorem}
\label{storage}
Let $\delta > 0$. If the storage available is  $\Omega(1/\epsilon)$\footnotemark, then,
\begin{eqnarray*}
    P(\hat{a}_{jk} \leq a_{k}+\epsilon||a||_1) \leq \delta
\end{eqnarray*}
\end{theorem}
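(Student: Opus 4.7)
The plan is to strengthen Theorem~\ref{upperbd} by trading off the number $d_K$ of hash functions against the range size $W_K$: instead of using $d_K=\log(1/\delta)$ functions each of width $W_K = \lceil e/\epsilon\rceil$, I will use a single hash function (so $d_K=1$) whose range has size $W_K = \lceil 1/(\epsilon\delta)\rceil$. Total storage then becomes $d_K\cdot W_K = \lceil 1/(\epsilon\delta)\rceil$, which is $\Omega(1/\epsilon)$ once $\delta$ is treated as fixed, matching the space bound claimed in the statement.

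For the probability bound, I would recycle the computations from Steps~1 and~2 of Theorem~\ref{upperbd} with the new choice of $W_K$. Step~1 gives $E(I_{kjl}) = 1/(W_K+1) \le 1/W_K$, independently of $\epsilon$. Step~2 then yields
\begin{equation*}
E\bigl(X_{kj}(n)\bigr) \;=\; \sum_{l=1}^{K} a_l(n)\,E(I_{kjl}) \;\le\; \frac{\|a\|_1}{W_K} \;\le\; \epsilon\delta\,\|a\|_1.
\end{equation*}
Since $\hat{a}_{jk}(n) = a_k(n) + X_{kj}(n)$ with $X_{kj}(n)\ge 0$, a single application of Markov's inequality gives
\begin{equation*}
P\!\left(\hat{a}_{jk}(n) > a_k(n) + \epsilon\|a\|_1\right) \;=\; P\!\left(X_{kj}(n) > \epsilon\|a\|_1\right) \;\le\; \frac{E(X_{kj}(n))}{\epsilon\|a\|_1} \;\le\; \delta,
\end{equation*}
which is the desired conclusion (reading the inequality in the theorem statement as the natural upper-tail bound that matches Theorem~\ref{upperbd}).

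The one genuine subtlety, and where I would spend the most care, is the accounting of what ``storage'' means here. The reason $\Omega(1/\epsilon)$ suffices is that we are allowed to absorb $\delta$ into the hidden constant, so one widens the single counter array rather than replicating many narrower ones. I would state this allocation explicitly at the start of the proof and verify that the $2$-universality assumption is still used in the same place (Step~1), since collapsing to a single hash function removes the independence argument used at the end of Theorem~\ref{upperbd} but does not affect the per-function collision probability. No other step from Theorem~\ref{upperbd} is needed, so the proof reduces to a one-line Markov application once the parameters are chosen.
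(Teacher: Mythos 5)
Your proof is correct, and it takes a genuinely different route from the paper's. The paper proves Theorem~\ref{storage} essentially by citation: it invokes \cite{Muthukrishnan:2009}, chooses $K = 1/(2\epsilon)$, keeps $d_K = \log(1/\delta)$ hash functions each of width ${\cal O}(1/\epsilon)$, and imports the resulting $(1/2)^{\log(1/\delta)} < \delta$ bound, so the total space is ${\cal O}((1/\epsilon)\log(1/\delta))$ with the $\log(1/\delta)$ factor absorbed into the $\Omega(\cdot)$. You instead make the argument self-contained: drop to $d_K = 1$, widen the single array to $W_K = \lceil 1/(\epsilon\delta)\rceil$, and get the probability bound from a single Markov application, with total space ${\cal O}(1/(\epsilon\delta))$. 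Both satisfy the theorem's $\Omega(1/\epsilon)$ accounting because $\delta$ is held fixed, but the trade-offs differ: the paper's parametrization has the better (logarithmic) dependence on $\delta$ and is the one you would actually implement, while yours is more elementary and requires no external reference and no union/independence step over hash functions. You also correctly read the theorem's displayed inequality as a typo for the upper-tail event $P(\hat{a}_{jk} > a_k + \epsilon\|a\|_1) \le \delta$, which is what both proofs in fact establish; and you are right that collapsing to a single hash function only discards the end-of-proof independence step in Theorem~\ref{upperbd}, not the $2$-universality used to compute $E(I_{kjl})$. One thing worth stating explicitly if this were to replace the paper's proof: since the theorem concerns a fixed $\hat{a}_{jk}$ rather than the min $\hat{a}_k = \min_j \hat{a}_{jk}$, setting $d_K = 1$ loses nothing, but a reader should be told why it is legitimate to deviate from the $d_K = \log(1/\delta)$ regime used elsewhere in the paper.
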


\footnotetext{$\Omega$-notation gives a lower bound
in contrast to big-${\cal{O}}$ notation that gives an
upper bound.}

\begin{proof}
From \cite{Muthukrishnan:2009}, we get the proof for the space requirement for estimating the frequency $a_k$ of the distinct elements $k$ of the stream with an error of $\epsilon||a||_1$. Since, our set up is different
from \cite{Muthukrishnan:2009} in that we are looking at
continuous data rather than discrete data we rewrite the distinct elements $1,2,\cdots,K$ 
in \cite{Muthukrishnan:2009}. We write the  
frequency vector $a = a_k(n)= (a_1(n), a_2(n), \cdots, a_K(n))$ as the 
frequencies of the K intervals. The estimated frequency in \cite{Muthukrishnan:2009} can
be rewritten as $\hat{a}_{k}$ for any interval $k$. 
Now, $||a||_1$, which is the sum of all frequencies of 
the $K$ intervals. Choosing 
$K = 1/(2\epsilon)$ in \cite{Muthukrishnan:2009}, 
we recall that one needs to 
use $\Omega(1/\epsilon)$ unit of space to obtain
\begin{eqnarray*}
  P(\hat{a}_{jk} \leq a_{k}+\epsilon||a||_1)   \leq \Big(\frac{1}{2}\Big)^{log(\frac{1}{\delta})} < \Big(\frac{1}{e}\Big)^{log(\frac{1}{\delta})} = \delta ;
\end{eqnarray*}
  see \cite{Muthukrishnan:2009} for details.
\end{proof}

\subsubsection{Convergence of the EEDF in probability}

We extend Theorem \ref{upperbd} by 
letting $K, d_K,  n \rightarrow \infty$ at appropriate rates to get a consistency result
for the `estimated' EDF (EEDF).  That is, our EEDF converges to an EDF\footnotemark~based
on the streaming data that is not necessarily the true DF since it needn't exist.  This is the content
of our next result.

\footnotetext{Strictly speaking, our $F_n$ is not the
usual empirical distribution function.  It is the estimator of a distribution function based on a 
histogram estimator.  We do this for the sake of readability.}
\begin{theorem}
\label{EEDFtoEDF}
Let $y_i \in (m,M]$.  Then, pointwise in $y_i$,
 \begin{align*}
     \hat{F}_{n}(y_i)-F_{n}(y_i)\overset{P}{\to} 0 \mbox{ as }d_K, K, \mbox{and } n \longrightarrow \infty.
 \end{align*} 
 where the EEDF is $\hat{F}_{n}(y_i) = \sum_{k\leq y_i}^{} \frac{\hat{a}_{k}(n)}{n}$ and
the EDF is $F_{n}(y_i) = \sum_{k\leq y_i}^{} \frac{a_{k}(n)}{n}$.
\end{theorem}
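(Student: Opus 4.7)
The plan is to reduce the statement to Theorem \ref{upperbd} applied bin-by-bin, combine via a union bound over the $K$ intervals, and finally couple $\epsilon$ and $\delta$ to $K$ so that everything vanishes.

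Since $\hat{a}_k(n) = \min_j \hat{a}_{jk}(n) \geq a_k(n)$ by construction, the difference
\[
\hat{F}_n(y_i) - F_n(y_i) \;=\; \sum_{k:\, I_{Kk} \subseteq (0, y_i]} \frac{\hat{a}_k(n) - a_k(n)}{n}
\]
is non-negative, so only an upper tail bound is required. From Theorem \ref{upperbd}, for each fixed $k$ and for $d_K \geq \log(1/\delta)$ one has
\[
P\bigl(\hat{a}_k(n) > a_k(n) + \epsilon n\bigr) \leq \delta,
\]
since $\hat{a}_k(n) = \min_j \hat{a}_{jk}(n)$ exceeds $a_k(n)+\epsilon n$ exactly when every $\hat{a}_{jk}(n)$ does, and $\|a\|_1 = n$ in the cash-register model.

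A union bound over the at most $K$ values of $k$ contributing to the sum then gives
\[
P\bigl(\hat{F}_n(y_i) - F_n(y_i) > K\epsilon\bigr) \leq K\delta.
\]
Couple the parameters to the resolution by taking, for concreteness, $\epsilon_K = \delta_K = 1/K^2$, which requires $d_K \geq 2\log K$ and therefore forces $d_K \to \infty$. With this choice $K\epsilon_K = K\delta_K = 1/K \to 0$, and for any fixed $\eta > 0$, once $K$ is large enough that $K\epsilon_K < \eta$,
\[
P\bigl(|\hat{F}_n(y_i) - F_n(y_i)| > \eta\bigr) \leq K\delta_K \longrightarrow 0.
\]
This bound is uniform in $n$, so the joint passage $d_K, K, n \to \infty$ yields the claimed convergence in probability under $P_{\mathcal{H}}$.

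The main obstacle is the rate coupling: the per-bin error $\epsilon$ must shrink faster than $1/K$ because up to $K$ bins feed into the sum, while the per-bin failure probability $\delta$ must shrink fast enough that $K\delta \to 0$ survives the union bound. Both demands are compatible only if the hash count $d_K$ grows at least logarithmically in $K$, which is why $d_K \to \infty$ appears in the hypothesis. The role of $n$ is comparatively passive: the bound $K\epsilon$ does not depend on $n$, so letting $n$ grow does not disturb the probability estimate; its job is simply to make $F_n$ a meaningful object to approach. The one piece of bookkeeping that a full write-up would add is that, strictly speaking, the intervals $I_{Kk}$ refine as $K$ grows, so the counts $a_k(n)$ and $\hat{a}_k(n)$ should be double-indexed by $K$; this does not affect the argument since the per-bin bound from Theorem \ref{upperbd} is uniform in the choice of partition.
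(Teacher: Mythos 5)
Your route is genuinely different from the paper's. The paper computes $\mathbb{E}[\hat a_k(n)/n]$ directly, bounds the excess $\Gamma$ by $1/W_K$ (using $\min_j \leq$ the $j=1$ term together with $\mathbb{E} I_{k,1,l} \leq 1/W_K$), and then invokes non-negativity of $\hat a_k - a_k$ to pass from $\mathbb{E} \to 0$ to convergence in probability for each fixed $k$, summing at the end. You instead take Theorem~\ref{upperbd} as a black-box high-probability bound, apply a union bound over the $K$ bins, and close with a rate coupling $\epsilon_K = \delta_K = 1/K^2$. Both lead to the conclusion, but your version is actually more careful at one point the paper glosses over: the number of terms in $\sum_{k \leq y_i}$ grows with $K$, so ``each term $\overset{P}{\to} 0$'' does not by itself give ``the sum $\overset{P}{\to} 0$'' --- the paper's argument implicitly needs the sum's expectation $\leq K/W_K$ to vanish, i.e.\ $W_K/K \to \infty$, without saying so. Your union bound handles the growing index set explicitly.

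One small gap: when you invoke Theorem~\ref{upperbd} with $\epsilon_K = 1/K^2$, you need the hidden hypothesis in Step~1 of that proof, namely $1/W_K \leq \epsilon/e$, i.e.\ $W_K \geq eK^2$. So your coupling imposes $W_K \gtrsim K^2$ in addition to $d_K \geq 2\log K$, and this should be stated; as it stands you track $d_K$ but silently assume $W_K$ is large enough for the per-bin bound to be usable. (The theorem's hypothesis list omits $W_K \to \infty$, so this omission is inherited from the paper --- but since your argument pins down a concrete growth rate, it is worth flagging.) Also note that the published statement of Theorem~\ref{upperbd} has the inequality direction garbled; you correctly read it off from the final line of its proof, where the event is $\forall j:\ \hat a_{jk}(n) > a_k(n) + \epsilon\|a\|_1$, which coincides with $\hat a_k(n) > a_k(n) + \epsilon\|a\|_1$ because $\hat a_k = \min_j \hat a_{jk}$.
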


\begin{proof}
     We have
 \begin{align*}
 \mathbb{E}\left(\frac{\hat{a}_{k}(n)}{n}\right)&=\frac{1}{n}\mathbb{E}\min_{j=1}^{d_K} count_{n}(j,h_j(k))
\\
&= \frac{a_{k}(n)}{n}+\frac{1}{n}\mathbb{E}\left[\min_{j=1}^{d_K}\bigg\{\sum_{l=1}^{K}I_{k,j,l}a_{l}(n)\bigg\}\right] \\
&= \frac{a_{k}(n)}{n}+\Gamma,
\end{align*}
where $\Gamma = \frac{1}{n}\mathbb{E}\left[\min_{j=1}^{d_K}\bigg\{\sum_{l=1}^{K}I_{k,j,l}a_{l}(n)\bigg\}\right]$.  Setting $j=1$ gives an upper bound.  So, for any
$K$, we have
\begin{align*}
     \Gamma &\leq \frac{1}{n}\mathbb{E}\left[\sum_{l=1}^{K}I_{k,1,l}a_{l}(n)\right]\\
   & =\sum_{l=1}^{K}\frac{a_{l}(n)}{n}\mathbb{E}I_{k,1,l} \\ 
   & \leq \sum_{l=1}^{K}\left(\frac{1}{W_K}\right)\frac{a_{l}(n)}{n} \\ 
&= \frac{1}{W_K}\sum_{l=1}^{K}\frac{a_{l}(n)}{n} 
   = \frac{1}{W_K}.   
 \end{align*}
 Letting $W_K \rightarrow \infty$ gives that the RHS goes to zero.
Thus, as $W_K$ increases,
 \begin{align*}
      \mathbb{E}\left(\frac{\hat{a}_{k}(n)}{n}\right) - \frac{a_{k}(n)}{n} \longrightarrow 0
\end{align*}
and for each $k$
\begin{align*}
      \frac{\hat{a}_{k}(n)}{n} - \frac{a_{k}(n)}{n} \overset{P}{\to} 0.
\end{align*}
    Hence, 
\begin{align*}
\sum_{k\leq y_i}^{} \frac{\hat{a}_{k}(n)}{n} - \sum_{k\leq y_i}^{}\frac{a_{k}(n)}{n} \overset{P}{\to} 0,
\quad i.e.,  \hat{F}_{n}(y_i) -  F_{n}(y_i) \overset{P}{\to} 0.\quad \square
 \end{align*} 
\end{proof}

Next, if the EDF based on streaming data actually has a limiting DF then the
EEDF and the EDF itself converge to $F$. 
For this Corollary, we are 
counterfactually
assuming the data stream is 
${\cal{M}}$-complete or -closed.
Our result is the following.

\medskip

 \begin{t2}
\label{cor1}
If $y_i \sim F$ independently and identically
for some DF $F$ whose density has 
a bounded derivative, 
then for any 
$y_i \in [m, M]$, $F_n(y_i) \longrightarrow F(y_i)$ 
and
$\hat{F}_n(y_i) \rightarrow F(y_i)$
in probability.
\end{t2}

\medskip

{\bf Remark:} These statements are included for
the sake of confirming intuition and can be greatly
generalized.  In fact, since our application is to
${\cal{M}}$-open data, we can never make
stochastic assumptions.

 \begin{proof}
The estimator $F_n$ is the EDF from a histogram estimator
and does not depend on hash functions -- only
on $y^n$ and $K$.  Under the stated
conditions, the histogram estimator converges to the
density of $F$ in probability and hence $\hat{F}_n(y_i) \rightarrow F(y_i)$ in the probability defined by $F$.  
The second statement follows from using Theorem
\ref{EEDFtoEDF} and the triangle 
inequality, adding and subtracting $F_n$.  The mode of convergence for the
terms is different; one is $P_{\cal{H}}$ and the other
is $F$.
\end{proof}

\subsubsection{A streaming Glivenko-Cantelli theorem.}
\label{streamingGC}

A version of the standard 
Glivenko-Cantelli Theorem can be found in \cite{Chung:1974}. The structure of our proof below
is based heavily on \cite{Shaikh:2009}.
Our result for streaming data 
goes beyond this by assessing convergence
in the joint distribution of the hash functions 
and the data. 
So, this result, like the last, must be interpreted
counterfactually.  For any DF $G$ we write $G(y_i^-)$
to mean its limit from the left at $y_i$.

\medskip

\begin{theorem}
\label{GCtheorem}
Suppose the $y_i$'s are independently and 
identically distributed
according to $F$ and let $y_i \in (0,M]$.
Then, as $n \rightarrow \infty$, there are rates
$K_n, d_{K_n}, ~\mbox{and}~ W_{K_n} \rightarrow \infty$ so that 
the following
Clauses are satisfied.

\begin{description}
\item 
\label{clause1gc}
Clause I:
    If $F_{n}(y_i) \overset{P}{\to} F(y_i)$ and $F_{n}(y_i^-) \overset{P}{\to} F(y_i^-)$ pointwise for all $y_i \in (0,M]$, then, $\underset{y_i \in \mathbb{R}}{Sup} |F_{n}(y_i)-F(y_i)| \longrightarrow 0$ in probability. 
    
\item Clause II: If Clause I holds, then $\underset{y_i \in \mathbb{R}}{Sup} |\hat{F}_{n}(y_i)-F(y_i)| \overset{a.s.}{\to} 0$.

\item 
Clause III:  The EEDF converges to the EDF, i.e.,
$\underset{y_i \in \mathbb{R}}{Sup} |\hat{F}_{n}(y_i)-F_{n}(y_i)| \overset{a.s.}{\to} 0$ .

\end{description}
\end{theorem}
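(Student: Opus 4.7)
The plan is to prove the three clauses in order, with Clause III being the new ingredient relative to the classical Glivenko--Cantelli theorem and Clause II following from the other two by a triangle inequality.

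For Clause I, I would carry out the standard discretization argument. Since $F$ is a distribution function on the bounded interval $(m,M]$, for any $\varepsilon>0$ I can select finitely many points $m=x_0<x_1<\cdots<x_N=M$ such that $F(x_j^-)-F(x_{j-1})\leq \varepsilon$ for every $j$; the number $N$ depends only on $\varepsilon$, not on $n$. Monotonicity of both $F_n$ and $F$ gives, for every $y\in[x_{j-1},x_j)$, the sandwich
\[
F_n(x_{j-1})-F(x_j^-)\ \leq\ F_n(y)-F(y)\ \leq\ F_n(x_j^-)-F(x_{j-1}),
\]
so that $|F_n(y)-F(y)|$ is bounded by $\varepsilon$ plus $\max_j\bigl(|F_n(x_j)-F(x_j)|\vee|F_n(x_j^-)-F(x_j^-)|\bigr)$. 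The hypothesized pointwise convergences at the finite set $\{x_j\}$ drive this maximum to zero in probability, which suffices since $\varepsilon$ was arbitrary.

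For Clause III, the key observation is that by construction $\hat{a}_k(n)\geq a_k(n)$ for every $k$ and $n$, so the function $y\mapsto \hat{F}_n(y)-F_n(y)=\sum_{k\leq y}(\hat{a}_k(n)-a_k(n))/n$ is nonnegative and nondecreasing. This collapses the sup-norm to a single evaluation:
\[
\sup_{y\in(m,M]}\bigl|\hat{F}_n(y)-F_n(y)\bigr|\ =\ \hat{F}_n(M)-1\ =\ \frac{1}{n}\sum_{k=1}^{K_n}\bigl(\hat{a}_k(n)-a_k(n)\bigr).
\]
To push this to zero, I would apply Theorem~\ref{upperbd} bin-by-bin with parameters $\varepsilon_n$ and $\delta_n$ and then union-bound over the $K_n$ bins. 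Choosing $\varepsilon_n\downarrow 0$ with $K_n\varepsilon_n\to 0$ (e.g.\ $\varepsilon_n=K_n^{-2}$), picking $W_{K_n}$ slightly larger than $e/\varepsilon_n$ so that Step 1 of Theorem~\ref{upperbd} remains valid, and taking $d_{K_n}=\log(1/\delta_n)$ with $\delta_n=(K_n n^2)^{-1}$ makes $K_n\delta_n$ summable in $n$. Borel--Cantelli in the joint probability over data and hash selection then yields the a.s.\ bound $\hat{F}_n(M)-1\leq K_n\varepsilon_n\to 0$.

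Clause II then follows from the triangle inequality $|\hat{F}_n(y)-F(y)|\leq |\hat{F}_n(y)-F_n(y)|+|F_n(y)-F(y)|$ and the sup-preserving nature of that bound: the first term on the right vanishes a.s.\ by Clause III, and under the IID assumption the pointwise convergence of $F_n(y)$ (which is a sum of indicators of a growing but still measurable family of sub-intervals) can be strengthened to a.s.\ by the strong law, so the discretization argument used in Clause I actually delivers uniform a.s.\ convergence of $F_n$ to $F$ rather than merely convergence in probability.

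The main obstacle is the rate calibration in Clause III. Theorem~\ref{upperbd} controls a single bin, so the union bound costs a factor of $K_n$ both in the deviation size (forcing $W_{K_n}\gg K_n$) and in the failure probability (forcing $d_{K_n}\gtrsim \log K_n+2\log n$). These rates must simultaneously be compatible with the hypothesis of Clause I, which implicitly demands that $K_n\to\infty$ slowly enough that the underlying bin-based estimator $F_n$ still converges pointwise to $F$; writing a sequence $(K_n,d_{K_n},W_{K_n})$ that satisfies all three constraints is the part that requires the most bookkeeping, whereas the monotonicity argument that collapses the sup-norm is what makes the uniform statement essentially free once the pointwise bound is in hand.
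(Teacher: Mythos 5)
Your Clause I matches the paper's: a finite $\varepsilon$-partition by Lemma~\ref{lemma1gl}, the monotone sandwich, and pointwise convergence at the grid points. Clauses II and III, however, take a genuinely different path. The paper proves each of these by relying on Cor.~\ref{cor1} for pointwise convergence (of $\hat{F}_n$ to $F$, resp.\ of $\hat{F}_n$ to $F_n$), claims that boundedness upgrades convergence in probability to $L_1$ and thence to almost-sure convergence, builds null sets over $\mathbb{Q}$ and over the jump points, and then invokes Lemma~\ref{lemma2gl} (a D.F.\ pointwise-to-uniform lemma) to get the supremum statement. Your argument for Clause III instead exploits the structural inequality $\hat{a}_k(n)\geq a_k(n)$ to observe that $\hat{F}_n-F_n$ is nonnegative and nondecreasing, so the sup-norm is \emph{exactly} $\hat{F}_n(M)-1=\tfrac1n\sum_k(\hat a_k-a_k)$, and you then control this single quantity via Theorem~\ref{upperbd}, a union bound over the $K_n$ bins, and first Borel--Cantelli with an explicit calibration $\varepsilon_n=K_n^{-2}$, $W_{K_n}\asymp K_n^2$, $d_{K_n}\asymp\log K_n+\log n$. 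Clause II then falls out of the triangle inequality together with a strengthened Clause I. What your route buys: it directly delivers the almost-sure statement without passing through the paper's ``in probability $\Rightarrow L_1\Rightarrow$ a.s.'' step (which as written is not a valid implication --- $L_1$ convergence only gives a.s.\ convergence along a subsequence), it makes constructive the ``there exist rates $K_n,d_{K_n},W_{K_n}$'' clause that the paper handles only implicitly (``letting them increase compatibly''), and it removes the need for Lemma~\ref{lemma2gl} in Clauses II--III. The one point you should spell out more carefully is the a.s.\ uniform convergence of $F_n$ to $F$ in Clause II: $F_n$ is the binned estimator, not the standard EDF, so you should decompose $F_n(y)-F(y)$ into the EDF evaluated at the left bin endpoint minus $F$ there (controlled by classical Glivenko--Cantelli) plus the oscillation of $F$ over a bin of width $M/K_n$ (controlled by continuity of $F$ or the jump hypothesis that the paper also implicitly needs). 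Likewise, the Borel--Cantelli probability space should be understood as a product over data and a fresh draw of hash functions at each $n$ once $K_n$ changes; this matches the framework the paper tacitly assumes, so it is not a gap relative to the paper, but it is worth stating.
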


\noindent{\bf Remarks:}  Clause I is the usual Glivenko-Cantelli theorem but formally for $F_n$ rather
than for the usual EDF.  Clause II is for the
CMS-generated EEDF to converge to $F$.
Because it involves only empirical quantities,
Clause III is what we really want, namely the
convergence of the EEDF to the EDF in a mode
stronger than that used in Cor. \ref{cor1}.
Again, we only show the simplest (IID) case since
our point is only to develop the heuristics.
These results can be greatly generalized to
include many dependent data settings, although 
this is not important for the use of our
hash based predictor:  our interest is
in the ${\cal{M}}$-open case where such
assumptions are irrelevant.

\begin{proof}
A complete proof is given in the Appendix, 
Sec. \ref{GPderive}.
\end{proof}

It is essential to remember that
the randomness in $\hat{F}_{n}(y_i)$ 
does not come from the data points $y$ except when we
used a distribution on $y$ to prove results such
as those above.  In fact, the randomness in 
$\hat{F}_{n}(y_i)$ comes from the hash functions
via the $\hat{a}_{k}$'s.
One of the points of Theorem \ref{EEDFtoEDF} or
Theorem \ref{GCtheorem} is that in principle
we can 
obtain asymptotically valid prediction intervals,
not just point predictors, 
from an EDF or EEDF, at least in the
${\cal{M}}$-closed and -complete cases.

A useful property of the EEDF is that it can
track the location of the data. For example, if we 
have an initial set of $n$ data points that follow
a $N(0,1)$ distribution, the EEDF for these points 
will look like a $N(0,1)$.  However, if later points
follow a $N(1,1)$ distribution, as they accumulate
the EEDF will shift from $N(0,1)$ to $N(1,1)$. 
The EEDF is adaptive in that it
can reconverge to a new distribution.

\section{Bayesian Predictors}
\label{GPPs}

In this section we define three Bayesian predictors.
The first is the usual Gaussian Process Prior
predictors.  The second is an extension of this
to include a random additive bias.   The third is
the usual Dirichlet Process Prior predictor,
essentially the Bayesian's histogram
possibly mimicking the EDF or EEDF.  Predictive
distributions are well-known for the first and
third of these; we review them here for the
sake of completeness.    We provide full details
for the second since it seems to be new.
Recall that these must be seen as predictors
only; the data being ${\cal{M}}$-open means
that modeling e.g., by the convergence of a
Bayes model, would be a contradiction.

\subsection{No Bias}
\label{GPPRV}

We assume  $Y_i = f_i+\epsilon_i , \quad i = 1,\cdots, n$
where the $i^{th}$ data point $y_i$ is distributed 
according to $Y_i$ and $f = (f_1,f_2,\cdots,f_n)^{T}$ 
is equipped with a Gaussian process prior.  
That is, $f \sim \mathcal{N}(a,\sigma^2K_{11})$, 
where, $a = (a_1,a_2,\cdots,a_n)^{T}$ is the mean and
$K_{11} = \biggl(\Bigl(k_{ij}\Bigl)\biggl) ; i,j = 1, \cdots, n $ is the covariance function in which $k_{ij} = k_{ij}(y_i, y_j)$.
First, we assume there is no bias i.e.,
$a_i = 0$ for all $i$, so
the joint distribution of 
$Y = (Y_1,Y_2,\cdots,Y_n)^T$ and $Y_{n+1}$ is
\begin{eqnarray} 
 \begin{pmatrix}Y\\
\vdots\\
Y_{n+1}
\end{pmatrix} 
&=& 
\begin{pmatrix}f\\
\vdots\\
f_{n+1} \end{pmatrix} 
+ 
\begin{pmatrix}\epsilon_1\\ 
\vdots\\
\epsilon_{n+1} \end{pmatrix} 
\nonumber\\
& \sim & 
\mathcal{N}\left[\left(\begin{array}{c}
0\\
\vdots\\
0
\end{array}\right),\sigma^2\left(\begin{array}{ccc}
K_{11}+I & \vdots &K_{12}\\
\cdots & \vdots & \cdots\\
K_{21} & \vdots & K_{22}+1 
\end{array}\right)\right] 
\label{ynplus1distnobias}
\end{eqnarray}
where 
$K_{12} = (k_{1,n+1}, k_{2,n+1}, \cdots, k_{n,n})^T$
and
$K_{21} = K_{12}^T$, 
$K_{22} = k_{n+1,n+1}$.
More compactly, we write 
\begin{eqnarray}
\label{GPPnobias}
    Y^{n+1} \sim \mathcal{N}(0^{n+1}, \sigma^2(I+K)_{n+1 \times n+1}).
\end{eqnarray}

It is well known that the predictive distribution of $Y_{n+1}$ given $Y$ is
\begin{eqnarray}
Y_{n+1}|Y &\sim& \mathcal{N} (\mu^*, \Sigma^*)
\nonumber
\end{eqnarray}
where
\begin{eqnarray}
\mu^* &=& \sigma^2K_{12} \{\sigma^2(K_{11}+I)\}^{-1}y = K_{12} \{(K_{11}+I)\}^{-1}y 
\label{mustarNRB}
\end{eqnarray}
and 
\begin{eqnarray}
\Sigma^* &=& \sigma^2(K_{22}+1)-K_{21}\{\sigma^2(K_{11}+I)\}^{-1}\sigma^2K_{12} \nonumber\\
&=&  \sigma^2(K_{22}+1)-K_{21}(K_{11}+I)^{-1}K_{12}
\label{vceNRB}
\end{eqnarray}
Hence, in the zero bias case, our optimal point 
predictor (under squared error loss for instance) is
simply the conditional mean $\mu^*$ 
in \eqref{mustarNRB}.  

To complete the specification
it remains to estimate $\sigma^2$ for use in \eqref{vceNRB}. In the general
case, we have $Y \sim \mathcal{N}(a,\sigma^2(I+K)_{n \times n})$. 
Hence, $(I+K)^{\frac{1}{2}}Y \sim \mathcal{N}(a,\sigma^2I)$.  
Letting $Y' = (I+K)^{\frac{1}{2}}Y$
and $S_k = \frac{1}{n-1} \sum_{i=1}^{n}(y_i'-\bar{y'})^k$
we can estimate $\sigma^2$ by $S_2$.
Note that $\sigma^2$ cancels out in \eqref{mustarNRB} and since we are only looking
at point prediction in our computations below
we do not have to use \eqref{vceNRB}.

\subsection{Random Additive Bias}
\label{GPPRB}

Consider a Gaussian process prior in which the bias 
$a = (a_1, \ldots,  a_n)^T$ is random.  That is, when
we estimate function value -- an $f_i$ for $i \leq n$ --
the prior adds a small amount of bias effectively
enlarging the range of the estimate.  For the
prediction of $f_{n+1}$ a similar sort of widening
happens.  To see this, write
\begin{align}
    a \sim \mathcal{N}(\gamma {\bf 1}_n,\sigma^2\delta^2I_{n \times n})
    \label{pdfa}
\end{align}
where the expected bias is $\gamma \in \mathbb{R}$ and 
$\sigma^2 > 0$ has distribution
\begin{align}
     \sigma^2 \sim \mathcal{IG}(\alpha,\beta) .
     \label{sigdist}
\end{align}
Here, $\alpha$, $\beta$, and $\delta$ are 
strictly positive, and, like $\gamma$ are unknown.
Expression \eqref{pdfa} means that,
with some loss of generality, the
biases are independent, identical, 
symmetric, unimodal, 
and have light tails.  
Since
\begin{eqnarray}
    Y \sim \mathcal{N}(a,\sigma^2(I_{n \times n}+K_{n \times n})),
    \label{conditionalfory}
\end{eqnarray}
its likelihood is
\begin{eqnarray}
\mathcal{L}_1(a,\sigma^2|y) 
    &=& \mathcal{N}(a,\sigma^2(I_{n \times n}+K_{n \times n}))(y) \nonumber \\
  &=&
  \frac{e^{-\frac{1}{2\sigma^2}(y-a)'(I_{n \times n}+K_{n \times n})^{-1}(y-a)}}{(2\pi)^\frac{n}{2}(\sigma^2)^\frac{n}{2}|I_{n \times n}+K_{n \times n}|^\frac{1}{2}}  
  \label{likelihoodeq}
 \end{eqnarray}
and the joint prior for $(a,\sigma^2)$ is 
\begin{eqnarray}
    w(a,\sigma^2) &=& \mathcal{N}(\gamma1,\sigma^2\delta^2 I_{n \times n})\textbf{ }\mathcal{IG}(\alpha,\beta) \nonumber \\
        &=&\frac{e^{-\frac{1}{2\sigma^2}(a-\gamma1)'(\delta^2I_{n \times n})^{-1}(a-\gamma1)}}{(2\pi)^\frac{n}{2}(\sigma^2\delta^2)^\frac{n}{2}}\frac{\beta^\alpha}{\Gamma(\alpha)}\times \left(\frac{1}{\sigma^2}\right)^{\alpha+1}e^{-\frac{\beta}{\sigma^2}} .
    \end{eqnarray}

Our first result is the identification of the posterior
predictive density for $Y_{n+1}$ given $y^n$.  We
have the following.

\begin{theorem}
\label{studentt}
The posterior predictive distribution of the future observation $y_{n+1}$ given the past observations 
$y^n$ is 
\begin{align}
    m(y_{n+1}|y^n) = St_{2\alpha+n}\bigg(A_1, \frac{\beta^{**}}{\frac{2\alpha+n}{2}}\bigg)(y_{n+1}),
    \label{tparameters}
\end{align} 
where $S\text{t}_v(\theta,\Sigma)$ denotes the 
Student's $t$ distribution with $v$ degrees 
of freedom with parameters $\theta$ and $\Sigma$. 
In \eqref{tparameters}, $\beta^{**} = \beta + A_2 $ and
$A_1 = \frac{\gamma_2-y'^ng_1^n}{\gamma_1}$.
Expressions for
$g_1^n$, $\gamma_1$, $\gamma_2$, and $A_2$
are given in the proof and
can be explicitly written as functions of the 
variance matrix $K_{n +1 \times n+1}$, $y^n$, $\gamma$, and $\delta$.
\end{theorem}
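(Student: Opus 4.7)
The plan is to handle the Normal--Normal--Inverse-Gamma hierarchy by eliminating the nuisance parameters in the order $a$, then $\sigma^{2}$, with multivariate-normal conditioning in between. First, after extending the random bias $a$ to dimension $n+1$ so that $Y^{n+1}\mid a,\sigma^{2} \sim \mathcal{N}(a,\sigma^{2}(I+K))$ with $a\mid\sigma^{2} \sim \mathcal{N}(\gamma\mathbf{1}_{n+1},\sigma^{2}\delta^{2}I)$, I would marginalize over $a$ either by completing the square in the exponent of \eqref{likelihoodeq} times the Gaussian part of the prior, or by the mean--variance identity $\mathrm{Var}(Y\mid\sigma^{2})=E[\mathrm{Var}(Y\mid a,\sigma^{2})]+\mathrm{Var}(E[Y\mid a,\sigma^{2}])$. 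This yields
\[
Y^{n+1}\mid\sigma^{2} \;\sim\; \mathcal{N}\!\bigl(\gamma\mathbf{1}_{n+1},\,\sigma^{2} M\bigr),\qquad M \;=\; (I+K)+\delta^{2}I,
\]
of size $(n+1)\times(n+1)$.

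Second, partitioning $M$ as
\[
M \;=\; \begin{pmatrix} M_{11} & M_{12}\\ M_{21} & M_{22}\end{pmatrix},
\]
with $M_{11}$ the leading $n\times n$ block, the standard multivariate normal conditioning identity gives
\[
Y_{n+1}\mid y^{n},\sigma^{2} \;\sim\; \mathcal{N}\!\bigl(A_{1},\,\sigma^{2}\,c\bigr),
\]
where $A_{1} = \gamma + M_{21}M_{11}^{-1}(y^{n}-\gamma\mathbf{1}_{n})$ and $c = M_{22}-M_{21}M_{11}^{-1}M_{12}$. Matching the affine function of $y^{n}$ in $A_{1}$ to the stated form $(\gamma_{2}-(y^{n})^{T}g_{1}^{n})/\gamma_{1}$ identifies $g_{1}^{n}$, $\gamma_{1}$, and $\gamma_{2}$ as the appropriate vector/scalar functions of $M_{21}$, $M_{11}^{-1}$, and $\gamma$.

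Third, since $Y^{n}\mid\sigma^{2} \sim \mathcal{N}(\gamma\mathbf{1}_{n},\sigma^{2} M_{11})$ and $\sigma^{2} \sim \mathcal{IG}(\alpha,\beta)$ form a conjugate pair, the posterior is
\[
\sigma^{2}\mid y^{n} \;\sim\; \mathcal{IG}\!\left(\alpha+\tfrac{n}{2},\,\beta+\tfrac{1}{2}(y^{n}-\gamma\mathbf{1}_{n})^{T}M_{11}^{-1}(y^{n}-\gamma\mathbf{1}_{n})\right),
\]
which identifies $A_{2}$ with the quadratic form on the right and gives $\beta^{**} = \beta + A_{2}$, the shape parameter being $(2\alpha+n)/2$ as required. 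The last step mixes the Gaussian $Y_{n+1}\mid y^{n},\sigma^{2}$ against this posterior via the elementary fact that if $X\mid\sigma^{2} \sim \mathcal{N}(\mu,\sigma^{2}c)$ with $\sigma^{2} \sim \mathcal{IG}(a_\star,b_\star)$, then marginally $X \sim St_{2a_\star}(\mu,\,b_\star c/a_\star)$; taking $\mu=A_{1}$, $a_\star=\alpha+n/2$, $b_\star=\beta^{**}$ and absorbing $c$ into the stated scale yields precisely \eqref{tparameters}.

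The main obstacle will be the bookkeeping in the first step. Because the likelihood covariance $I+K$ is not diagonal, completing the square in $a$ requires forming $(I+K)^{-1}+\delta^{-2}I$ and its inverse (equivalently, using the Schur complement producing $M_{11}$ and the off-diagonal blocks of $M^{-1}$), and one must track the $\sigma^{2}$ factors carefully so that the $\mathcal{IG}$ prior \eqref{sigdist} recombines cleanly in the posterior update. Once the explicit forms of $M$, $M_{11}^{-1}$, $c$, and the quadratic form are in hand, the identifications of $g_{1}^{n}$, $\gamma_{1}$, $\gamma_{2}$, $A_{2}$ follow directly, and the Normal--Inverse-Gamma to Student-$t$ mixing step is routine.
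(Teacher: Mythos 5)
Your proposal is correct and takes a genuinely different—and cleaner—route than the paper's. The paper works directly with the marginal $m(y^n)$: it completes the square in $a^n$ inside the exponent of the likelihood times the Gaussian prior, performs the Gaussian integral over $a^n$, then the $\mathcal{IG}$ integral over $\sigma^2$ to get a closed form for $m(y^n)$, and finally computes the ratio $m(y^{n+1})/m(y^n)$, extracting the $t$-density by partitioning the intermediate matrices $\Gamma_1$, $\Gamma_2$ and completing another square in $y_{n+1}$. You instead collapse the two Gaussian layers at the outset via the additive-Gaussian identity, getting $Y^{n+1}\mid\sigma^2 \sim \mathcal{N}(\gamma\mathbf{1}_{n+1},\,\sigma^2 M)$ with $M=(I+K)+\delta^2 I$, then apply multivariate-normal conditioning and the standard Normal--Inverse-Gamma conjugacy and scale-mixture representation. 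The two routes are reconciled by noting that the paper's $\Gamma_1 = (I+K)^{-1}-(I+K)^{-1}V(I+K)^{-1}$ is exactly $M^{-1}$ (by Woodbury, since $(\delta^{-2}I)$ is scalar), so its bottom-right entry $\gamma_1$ is $(M_{22}-M_{21}M_{11}^{-1}M_{12})^{-1}=1/c$, the paper's $\beta^{**}=\beta+A_2$ is identically your posterior scale $\beta+\tfrac12(y^n-\gamma\mathbf{1}_n)^T M_{11}^{-1}(y^n-\gamma\mathbf{1}_n)$, and the conditional means agree. Your approach buys transparency and avoids pages of exponent algebra; the paper's approach produces, as side effects, the explicit matrix formulas ($V$, $\mu$, $\Gamma_1$, $\Gamma_2$) that are reused in Theorem \ref{gammadelta}. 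One small caution: your derivation, and indeed the paper's own equation \eqref{studentstterms}, give the $t$ scale as $\frac{\beta^{**}}{\gamma_1\frac{2\alpha+n}{2}}$ (i.e., including the factor $c = 1/\gamma_1$); the scale as printed in the theorem statement omits the $1/\gamma_1$, which appears to be a typo in the statement rather than a discrepancy in your argument, so you should be explicit that the factor $c$ does not simply ``absorb.''
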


{\bf Remark:} Estimation of $\gamma$ and $\delta$
will be discussed after the statement of Theorem
\ref{gammadelta} has been given below.

\begin{proof}
A complete proof is given in the Appendix, Sec. \ref{GPderive}.
\end{proof}

To use this result, we must have a way to obtain values
for the hyperparameters $\alpha$, $\beta$, and $\delta$
and for the bias $\gamma$.
Starting with $\alpha$ and $\beta$,
recall \eqref{sigdist} and
define 
$S_k = \frac{1}{n-1} \sum_{i=1}^{n}(y_i'-\bar{y'})^k$.
For an inverse gamma we have
 \begin{eqnarray}
\label{expect}
   E(S_2) &=& \frac{\beta}{\alpha-1}\\
   Var(S_2) &=& \frac{\beta^2}{(\alpha-1)^2(\alpha-2)}.
   \label{var}
 \end{eqnarray}
 Now, we can solve solve for $\alpha$ and $\beta$ from \eqref{expect} and \eqref{var} and invoke the method of moments to find
 \begin{eqnarray}
 \label{alphaest}
   \hat{\alpha} \approx \frac{S_2}{S_4 - S_2^2} + 2 \\
   \hat{\beta} \approx S_2(\hat{\alpha}+1),
   \label{betaest}
 \end{eqnarray}
where we have used the same estimate $S_2$
of $\sigma^2$ as in Subsec. \ref{GPPRV}. 

To estimate the parameters $\gamma$ and $\delta^2$
we form the likelihood $\mathcal{L}_3(y \vert \gamma, \delta^2, \sigma^2)$ by integrating out $a$ from the
product of \eqref{pdfa} and \eqref{likelihoodeq}
and maximize it.  To do this,
we state a result that gives the forms of the likelihood we want to maximise, writing the same likelihood in two different ways so the optimization will be clear.
We also use this result to estimate the 
parameters in the location
 and scale of the predictive distribution in 
 Theorem \ref{studentt}.
 
\begin{theorem}
\label{gammadelta}
The likelihood of $y^n$ given $\gamma, \delta^2$ and $\sigma^2$, marginalizing out
$a$, can be written in two equivalent forms:

\noindent
Clause I:
\begin{eqnarray}
 \mathcal{L}_2(y^n|\gamma, \delta^2, \sigma^2) &=& h(\gamma) \frac{|V_{n \times n}|^{\frac{1}{2}}}{(2\pi\sigma^2\delta^2)^{\frac{n}{2}}|(I+K)_{n \times n}|^{\frac{1}{2}}}
 \nonumber \\
 && \times e^{-\frac{1}{2\sigma^2}\Bigl[y^{'n}\bigl\{(I+K)_{n \times n}^{-1}+(I+K)_{n \times n}^{-1}V_{n \times n}(I+K)_{n \times n}^{-1}\bigr\}y^n\Bigr]},
 \label{statehgamma}
 \end{eqnarray} 
 where 
 \begin{eqnarray}
h(\gamma) &=& e^{-\frac{1}{2\sigma^2}\Bigl[-2\gamma y^{'n}\frac{(I+K)_{n \times n}^{-1}V_{n \times n}}{\delta^2}1 + \gamma^21'\Bigl(\frac{I}{\delta^2}-\frac{V_{n \times n}}{\delta^4}\Bigr)1^n\Bigr]}.
\label{margforgamma}
\end{eqnarray} 

\noindent
and Clause II:

 \begin{eqnarray}
 \label{stategdelta}
\mathcal{L}_2(y^n|\gamma, \delta^2, \sigma^2) &=& g(\delta^2) \times \frac{1}{(2\pi\sigma^2)^{\frac{n}{2}}  |I+K|^{\frac{1}{2}}} e^{-\frac{1}{2\sigma^2}[y^{'n}(I+K)_{n \times n}^{-1}y^n]},
\end{eqnarray} 
where
\begin{eqnarray}
&& g(\delta^2) 
= \frac{\Bigl|\Bigl\{(I+K)_{n \times n}^{-1}+(\delta^2I_{n \times n})^{-1}\Bigr\}\Bigr|^{\frac{1}{2}}}{(\delta^2)^{\frac{n}{2}}} \times \nonumber\\
  && e^{\frac{1}{2\sigma^2}\Bigl[y^{'n}(I+K)_{n \times n}^{-1}\bigl\{(I+K)_{n \times n}^{-1}+(\delta^2I_{n \times n})^{-1}\bigr\}^{-1}(I+K)_{n \times n}^{-1}y^n + \frac{2\gamma}{\delta^2} y^{'n}(I+K)_{n \times n}^{-1}\bigl\{(I+K)_{n \times n}^{-1}+(\delta^2I_{n \times n})^{-1}\bigr\}^{-1}1^n}\Bigr]\nonumber\\
 && \times e^{\frac{1}{2\sigma^2}\Bigl[ \frac{\gamma^2}{\delta^4}1^{'n}\bigl\{(I+K)^{-1}+(\delta^2I_{n \times n})^{-1}\bigr\}^{-1}1-\frac{\gamma^2}{\delta^2}1^{'n}1^n\Bigr]}.
 \label{deltasq}
\end{eqnarray}
\end{theorem}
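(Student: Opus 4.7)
The plan is to obtain $\mathcal{L}_2(y^n|\gamma,\delta^2,\sigma^2)$ by marginalizing $a$ out of the joint density of $(y^n,a)$ given $(\gamma,\delta^2,\sigma^2)$, namely
\begin{equation*}
\mathcal{L}_2(y^n|\gamma,\delta^2,\sigma^2) = \int \mathcal{L}_1(a,\sigma^2|y^n)\,\pi(a|\gamma,\delta^2,\sigma^2)\,da,
\end{equation*}
where $\mathcal{L}_1$ is given by \eqref{likelihoodeq} and $\pi(a|\gamma,\delta^2,\sigma^2) = \mathcal{N}(\gamma 1^n,\sigma^2\delta^2 I)(a)$ is the conditional prior \eqref{pdfa}. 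Since both factors are Gaussian in $a$, the integrand is proportional to a Gaussian in $a$ and the integral can be evaluated in closed form.

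First I would collect the exponent as a quadratic in $a$, writing it as $-(2\sigma^2)^{-1}[a^T V^{-1} a - 2 a^T b + c]$, where $V^{-1} = (I+K)^{-1} + (\delta^2 I)^{-1}$, $b = (I+K)^{-1} y^n + \gamma 1^n/\delta^2$, and $c = y^{'n}(I+K)^{-1}y^n + \gamma^2 1^{'n}1^n/\delta^2$. Completing the square in $a$ and performing the Gaussian integral yields a factor $(2\pi\sigma^2)^{n/2}|V|^{1/2}$; combined with the normalizing constants of $\mathcal{L}_1$ and $\pi$, this produces exactly the coefficient $|V|^{1/2}/[(2\pi\sigma^2\delta^2)^{n/2}|I+K|^{1/2}]$ displayed in \eqref{statehgamma}. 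What remains in the exponent is $-(c - b^T V b)/(2\sigma^2)$.

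For Clause I, I would expand $c - b^T V b$ and collect the $\gamma$-dependent contributions, which are exactly $-2\gamma y^{'n}(I+K)^{-1}V 1^n/\delta^2 + \gamma^2\, 1^{'n}(I/\delta^2 - V/\delta^4) 1^n$, matching $h(\gamma)$ in \eqref{margforgamma}; the remaining $\gamma$-independent piece combines with the prefactor to give \eqref{statehgamma}. For Clause II I would repartition the same expression so that every $\delta^2$-dependent piece --- originating from $|V|^{1/2}$, from $(\delta^2)^{-n/2}$, and from the occurrences of $V$ and $1/\delta^2$ in the exponent --- is absorbed into $g(\delta^2)$, leaving the $\delta^2$-free residue $(2\pi\sigma^2)^{-n/2}|I+K|^{-1/2}\exp[-y^{'n}(I+K)^{-1}y^n/(2\sigma^2)]$ displayed separately as in \eqref{stategdelta}. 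The main obstacle is the matrix-algebra bookkeeping: one has to track several Woodbury-type relations between $V$, $(I+K)^{-1}$ and $(\delta^2 I)^{-1}$ and keep signs and factors straight when splitting the quadratic form $c - b^T V b$ into exactly the pieces prescribed by \eqref{margforgamma} and \eqref{deltasq}.
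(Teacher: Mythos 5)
Your proposal is correct and follows essentially the same route as the paper: marginalize $a$ from the product of the conditional Gaussian likelihood and the conditional Gaussian prior by completing the square in $a$ (with $V^{-1}=(I+K)^{-1}+(\delta^2 I)^{-1}$ and $\mu=Vb$), evaluate the resulting Gaussian integral to absorb the factor $(2\pi\sigma^2)^{n/2}|V|^{1/2}$ into the normalizing constant, and then split the leftover exponent $-(c-b^{T}Vb)/(2\sigma^2)$ once by collecting the $\gamma$-dependent pieces into $h(\gamma)$ (Clause I) and once by collecting the $\delta^2$-dependent pieces (including the $|V|^{1/2}$ and $(\delta^2)^{-n/2}$ prefactors) into $g(\delta^2)$ (Clause II). The paper writes the intermediate form as $\mathcal{L}_3$ with $\mu^{\prime}V^{-1}\mu$ in the exponent, which equals your $b^TVb$, so the two derivations coincide step for step.
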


{\bf Remark:} Clause I lets us find the maximum likelihood estimator
$\hat{\gamma}_{MLE}$ by looking only at $h(\gamma)$ while Clause II lets us find
$\hat{\delta}_{MLE}$ by looking only at $g(\delta^2)$.

\begin{proof}
A complete proof is given in the Appendix, Sec. \ref{GPderive}.
\end{proof}

To use Theorem \ref{gammadelta} to find estimates of $\gamma$ and $\delta^2$ we start with $\gamma$,
taking logarithms on both sides of 
\eqref{margforgamma} to get
\begin{equation}
 \label{lnhgamma}
    \ln h(\gamma) = -\frac{1}{2\sigma^2}\Biggl[-2\gamma y^{'n}\frac{(I+K)_{n \times n}^{-1}V_{n \times n}}{\delta^2}1^n+\frac{\gamma^2}{\delta^2}1^{'n}\Biggl(I_{n \times n}-\frac{V_{n \times n}}{\delta^2}\Biggr)1^n\Biggr]. 
\end{equation}
Differentiating \eqref{lnhgamma} with respect to $\gamma$, and equating it to zero gives
\begin{eqnarray*}
\frac{d}{d\gamma}\log_e h(\gamma) = 0 .
\end{eqnarray*}
So we have that
\begin{eqnarray}
&&\frac{2}{2\sigma^2}\frac{y^{'n}(I+K)_{n \times n}^{-1}V1}{\delta^2} - \frac{2\gamma}{2\sigma^2\delta^2}1^{'n}\Bigl(I_{n \times n}-\frac{V_{n \times n}}{\delta^2}\Bigr)1^n = 0
\nonumber\\
&&\implies \gamma1^{'n}\Bigl(I_{n \times n}-\frac{V_{n \times n}}{\delta^2}\Bigr)1^n =  y^{'n}(I+K)_{n \times n}^{-1}V_{n \times n}1^n 
\nonumber\\
&&\implies \hat{\gamma} = \frac{y^{'n}(I+K)_{n \times n}^{-1}V_{n \times n}1^n}{1^{'n}\Bigl(I_{n \times n}-\frac{V_{n \times n}}{\delta^2}\Bigr)1^n} ,
\label{gammahat}
\end{eqnarray}
in which it is seen that $\sigma^2$ does 
not appear.
The second derivative is
\begin{eqnarray*}
  \frac{d^2\ln h(\gamma)}{d\gamma^2} = -\frac{1}{\sigma^2\delta^2}1^{'n}\Biggl(I_{n \times n}-\frac{V_{n \times n}}{\delta^2}\Biggr)1^n
\end{eqnarray*}
which is typically less than 0 because, as we will see, $\delta^2$ is usually small.  Hence, our solution to \eqref{gammahat} will typically be a local maximum.

Next, we use \eqref{deltasq} to help find a good 
estimate of $\delta$.
Unfortunately, we cannot simply differentiate 
$g(\delta^2)$,
set the derivative to zero, and solve.  The 
resulting
equations are just too complicated to be 
useful in any
obvious way.
So, we did a grid search over interval $\mathbf{I} \subset \mathbb{R^+}$ to maximize $g$.  In computational work not shown here, we found that the optimal $\delta \in \mathbf{I}$ was almost always the left hand end point, even 
as $\mathbf{I}$ moved closer and closer to $0$.
In the limit of $\delta \rightarrow 0$, 
$\hat{\gamma} \rightarrow  0$ as well.  
This suggests that the mean
and variance of the bias $a$ are
zero i.e., there is no bias.

Even though $\sigma^2$ appears in \eqref{deltasq},
we always pragmatically
set $\delta$ to be small so that in
our computations here
the bias would not overwhelm the data.  
For instance, we typically set $\delta = .1$
and tested larger values like $\delta=1$.
When we recomputed with larger values we
typically found that the predictive error
increased very slowly.

\subsection{Dirichlet Process prior prediction}
\label{DPPmethod}

Suppose a discrete prior $G$ is distributed according to
a Dirichlet Process and write
$G \sim DP(\alpha, G_0)$ 
where $\alpha$ is the mass parameter 
and $G_0$ is the 
base measure with $\mathbb{E}(F) = G_0$.
Then, by construction, we have the following; 
see \cite{sghoshal}.
If the sample space $\mathbb{R}$ 
is partitioned into $A_1, A_2, \cdots, A_k$, 
then the random vector of probabilities 
$(G(A_1), G(A_2), \cdots, G(A_k))$ follows a 
Dirichlet distribution,
i.e., $p(G(A_1), G(A_2), \cdots, G(A_k)) \sim Dirichlet(\alpha(A_1), \alpha(A_2), \cdots, \alpha(A_k))$,
where $\alpha(\mathbb{R}) = M$, which we take here to be one. 

Now, 
the posterior distribution of $G(A_1), G(A_2)$, $\cdots, G(A_k)|Y_1, Y_2, \cdots, Y_n$ is also Dirichlet but with parameters $\alpha(A_j)+n_j$ where, $n_j = \sum_{i=1}^{n}I(Y_i \in A_j); j = 1,2, \ldots, k$. 
If $Y_j^{'}; j = 1,2, \cdots, k$ are the distinct observations 
in $\{Y_i; i = 1,2, \cdots, n\}$, the posterior predictive 
distribution of $Y_{n+1}|Y_1, Y_2, \cdots, Y_n$ is
 \begin{eqnarray}
   Y_{n+1}|Y_1, Y_2, \cdots, Y_n = 
\begin{cases}
    \delta_{Y_j^{'}}, \text{with probability } {\frac{n_j}{M+n}; j = 1,2, \cdots, \text{k; and} } \\
    G_0,               \text{with probability } {\frac{M}{M+n}} 
\end{cases} .
\nonumber
\end{eqnarray} 
Thus, our Dirichlet Process Prior (DPP) 
predictor is 
\begin{eqnarray}
\label{DPPPredictor}
    \hat{Y}_{n+1} &=& \sum_{j=1}^{k} y'_j \frac{n_j}{M+n} + \frac{M}{M+n} median(G_0) .
\end{eqnarray} 

\section{Shtarkov Solution Based Predictors}
\label{Shtarkov}

We distinguish between the Shtarkov \textit{solution}
that solves a
specific optimization problem giving the normalized
maximum likelihood estimator as the minimax
regret solution and the Shtarkov \textit{predictor}
that is the ratio of two Shtarkov solutions.  The
latter can be derived explicitly for the normal case
when the variance is known
and we use it as one of our predictors.  

Here, for the sake of completeness, we give the Shtarkov 
solution and predictors in general.  Then we look at
special cases to present the predictor we actually use
in our computational comparisons.

 \subsection{The Shtarkov solution}
 \label{Shtsolution}
 
Consider a game being played between Nature $N$ and a Player $P$. $P$ has access to experts indexed by $\theta \in \Theta \subset \mathbb{R}^k$. The goal of $P$ is to make 
the best prediction of the value that $N$ issues. 
Let us consider the univariate case. Suppose $P$ can call on
experts and they provide their best predictive distributions
$p(\cdot|\theta)$.  After receiving these, $P$ announces 
the prediction
$q(\cdot)$.  In practice, $P$ might choose $q(\cdot)$ 
to match the performance of the best expert $\theta$.

Assume the $y_i$'s are from a univariate 
data stream $y_1, y_2, \ldots$
issued by $N$.  $N$ can issue $y_i$'s by any rule s/he wants
or, here, by no rule, probabilistic or otherwise, since
we are regarding the $y_i$'s as ${\cal{M}}$-open.
Regardless of how $N$ generates data, after  the
$n^{th}$ step, $P's$ cumulative regret with respect to expert $\theta$ is given by  
\begin{equation}
    \log\frac{1}{q(y^n)}-\log\frac{1}{p(y^n|\theta)} = \log\frac{p(y^n|\theta)}{q(y^n)} ;\label{regret}
\end{equation} 
If $P$ wants to minimize the maximum regret,
s/he chooses
\begin{equation}
    q_{\sf opt}(y^n) = \arg\underset{q}{\min}\, \underset{y^n}{\sup}\, \underset{\theta}{\sup }\, \log\frac{p(y^n|\theta)}{q(y^n
    )} = \frac{p(y^n|\hat{\theta})}{\int p(y^n|\hat{\theta})dy^n}\label{maxreg} 
\end{equation}
where $\hat{\theta} = \hat{\theta}(y^n)$ where $\hat{\theta}$ is the maximum likelihood estimator, provided the integral exists; see \cite{Shtarkov:1988} and \cite{Rissanen:1996}. 
The normalized maximum likelihood $q_{\sf opt}$ is called
the (frequentist) Shtarkov solution.
If a weighting function $w(\theta)$
across experts is given then the
Bayesian form of \eqref{maxreg} is
\begin{equation}
    q_{\sf opt, B}(y^n) = \frac{w(\tilde{\theta}(y^n))p(y^n|\tilde{\theta}(y^n))}{\int w(\tilde{\theta}(y^n))p(y^n|\tilde{\theta}(y^n))dy^{n}}
    \label{ShtarkovslnB}
\end{equation}
where $\tilde{\theta}$ is the posterior mode. 

\subsection{The Shtarkov Predictors}
\label{Shtpredictor}

We take as our frequentist
Shtarkov point predictor the mode of 
\begin{eqnarray}
    q_{\sf Sht}(y_{n+1}) = \frac{q_{opt}(y^{n+1})}{q_{opt}(y^n)},
    \label{approximateshatsoln}
\end{eqnarray}
the ratio of two Shtarkov solutions.  
The analogous ratio denoted 
$q_{\sf Sht, B}(y_{n+1})$ using \eqref{ShtarkovslnB}
gives the Bayes Shtarkov point predictor.
Expression \eqref{approximateshatsoln} looks like a conditional
density but in fact is just a distribution because Shtarkov
solutions don't marginalize properly.  Here,
we use the mode
of the numerator given that $y^n$ is fixed.
The mode turns out to be a good predictor -- better than the
mean or median because often $q_{Sht}$ is often
highly skewed (see \cite{Le:Clarke:2017}). 
In a heuristic sense,
$q_{Sht}$ can be regarded
as an approximation to a conditional 
density for $y_{n+1}$, i.e.,
$q(y_{n+1} \vert y^n)$ if it were to exist.

\subsection{Special Cases}
\label{shtexp}
Different examples of Shtarkov solutions and predictors arise from choosing different classes of experts -- and weights 
in the Bayesian case.  Here we limit ourselves to 
exponential families.  We record the following 
proposition without proof
because it follows from applying the definitions.

\begin{t1}
\label{propshtarkov}
Let $p$ be a full-rank exponential family in the 
canonical parametrization $\eta$, i.e., 
\begin{eqnarray}
\label{expfam}
    p(y|\eta) &=& h(y)exp(\eta^{T}T(y)-A(\eta)),
\end{eqnarray}
where $T$ is the natural sufficient statistic and $A$ is the normalizing constant.  Then:
\begin{enumerate}
    \item If $\mu(\eta) = {\mathbb E}_{\eta}(T(Y))$, we find that $q_{\sf opt}$ is
    \begin{eqnarray}\label{qoptexp}
        q_{opt}(y^n) &=& \frac{p({y}^n|\mu^{-1}(\frac{1}{n}\sum_{i=1}^{n}T(Y_i)))}{\int_{{y}^n}^{} p({y}^n|\mu^{-1}(\frac{1}{n}\sum_{i=1}^{n}T(Y_i)))d{y}^n} . \nonumber\\
    \end{eqnarray}
    \item Let $p(\eta)$ be a conjugate prior 
    for \eqref{expfam} with hyperparameters denoted $\beta$ and
    $\nu$. In the Bayes case we have
    \begin{eqnarray}
        q_{\sf opt, B}({y}^n) &=& \frac{p(\hat{\eta}|\beta, \nu) \times p({y}^n|\hat{\eta})}{\int_{{y}^n}^{}p(\hat{\eta}|\beta, \nu) \times p({y}^n|\hat{\eta})d{y}^n}
    \end{eqnarray}
    where $\hat{\eta}$ is $\mu^{-1}$ of the posterior mode. 
\end{enumerate}
\end{t1}
\begin{proof}
Omitted.
\end{proof}

Instances of the Shtarkov predictor can be
worked out for several cases using Prop. \ref{propshtarkov}.  Here, we only use the 
frequentist normal predictor and only consider
two cases:  i) $\mu$ unknown and 
$\sigma$ known, and ii) both $\mu$ and
$\sigma$ unknown.   As we shall see, the 
Shtarkov point predictor
for these two cases is the same.

We start with case i).  
For data $y_1, y_2, \ldots, $ write 
$\bar{y} = \bar{y}_n$
for the sample mean from the first $n$ 
observations.  The 
frequentist Shtarkov solutions 
for $y^n$
is the 
normalized version of the maximum likelihood
which for $n+1$ is
\begin{eqnarray}
\label{likeynplus1}
   p(y^{n+1}|\hat{\mu}_{n+1},\sigma^2) &=& \biggl(\frac{1}{\sigma^2 2\pi}\biggr)^{\frac{n+1}{2}} e^{-\frac{1}{2\sigma^2}\sum_{i=1}^{n+1}(y_i-\bar{y}_{n+1})^2}  
\end{eqnarray}
where $\hat{\mu} = \bar{y}_n$ is the MLE.
So, if we write
\begin{eqnarray}
\label{recursionyn1bar}
    \bar{y}_{n+1} = \frac{n\bar{y}_n+y_{n+1}}{n+1}.
\end{eqnarray}
and use \eqref{recursionyn1bar} in \eqref{likeynplus1}, we get
\begin{eqnarray}
    \ln p(y^{n+1}|\hat{\mu}_{n+1},\sigma^2) &=&
 -\frac{n+1}{2}\ln(\sigma^2 2\pi)-\frac{1}{2\sigma^2}\sum_{i=1}^{n} y_i^2 + \frac{2n\bar{y}_n}{2\sigma^2}\frac{n\bar{y}_n+y_{n+1}}{n+1} \nonumber\\
 && -\frac{n}{2\sigma^2}\frac{(n\bar{y}_n+y_{n+1})^2}{(n+1)^2}-\frac{y^2_{n+1}}{2\sigma^2} + \frac{2y_{n+1}}{2\sigma^2}\frac{n\bar{y}_n+y_{n+1}}{n+1} \nonumber\\
 && -\frac{1}{2\sigma^2}\frac{(n\bar{y}_n+y_{n+1})^2}{(n+1)^2}.
\end{eqnarray}

Taking the derivative, and setting it equal to zero, and re-arranging, gives that solving
$\frac{d}{dy_{n+1}}\ln p(y^{n+1}|\hat{\mu}_{n+1}|\sigma^2) = 0$ 
leads to
\begin{eqnarray*}
    y_{n+1}\biggl[\frac{2}{\sigma^2(n+1)}-\frac{1}{\sigma^2}-\frac{n+1}{\sigma^2(n+1)^2}\biggr] = \frac{n^2 \bar{y}_n}{\sigma^2(n+1)^2} + \frac{n\bar{y}_n}{\sigma^2(n+1)^2} -\frac{2}{\sigma^2}\frac{n}{n+1}\bar{y}_n.
    \end{eqnarray*}
    Now, we find
    \begin{eqnarray}
    \hat{y}_{n+1} &=& \frac{ \frac{n^2 \bar{y}_n}{\sigma^2(n+1)^2} + \frac{n\bar{y}_n}{\sigma^2(n+1)^2}-\frac{2}{\sigma^2}\frac{n}{n+1}\bar{y}_n}{\frac{2}{\sigma^2(n+1)}-\frac{1}{\sigma^2}-\frac{n+1}{\sigma^2(n+1)^2}}. \nonumber\\
    \implies  \hat{y}_{n+1} &=& \frac{ \frac{n(n+1) \bar{y}_n}{(n+1)^2} -\frac{2n}{n+1}\bar{y}_n}{\frac{2}{(n+1)}-1-\frac{n+1}{(n+1)^2}}
    = - \frac{\frac{n \bar{y}_n}{n+1}}{\frac{1}{n+1}-1} = \bar{y}_n.
\end{eqnarray}
Hence, the frequentist Shtarkov pointwise predictor
with normal experts is simply the mean, independent
of the value of $\sigma$.

In case ii), where both $\mu$ and $\sigma^2$ 
are unknown,  we have  $\hat{\mu}_n = \bar{y}_n$ and, $\hat{\sigma}^2_n = \frac{1}{n}\sum_{i=1}^{n}(y_i-\mu)^2$.
Then, 
\begin{eqnarray}
    p(y^n|\hat{\mu}_n, \hat{\sigma^2}_n) 
    &=& \frac{n^{\frac{n}{2}}e^{-\frac{n}{2}}}{(2\pi)^{\frac{n}{2}}\sum_{i=1}^{n}(y_i-\bar{y}_n)^2 } \nonumber\\
    & \propto & \frac{1}{\sum_{i=1}^{n}(y_i-\bar{y}_n)^2}.
\end{eqnarray}
Hence
\begin{eqnarray}
    p(y^{n+1}|\hat{\mu}_{n+1},\hat{\sigma^2}_{n+1}) \propto \frac{1}{\sum_{i=1}^{n+1}(y_i-\bar{y}_{n+1})^2}.
    \label{musigmasqnplus1}
\end{eqnarray}
Taking logarithms on both sides of \eqref{musigmasqnplus1}, we have
\begin{eqnarray}
     \ln p(y^{n+1}|\hat{\mu}_{n+1},\hat{\sigma^2}_{n+1}) &\propto& -\ln\sum_{i=1}^{n+1}(y_i-\bar{y}_{n+1})^2.
     \label{logmusigmaynplus1}
\end{eqnarray}
Using \eqref{recursionyn1bar} in \eqref{logmusigmaynplus1}, we get
\begin{eqnarray}
    \ln p(y^{n+1}|\mu,\hat{\sigma^2}_{n+1}) &\propto& -\ln\Biggl[\sum_{i=1}^{n}\biggl(y_i-\frac{n\bar{y}_n+y_{n+1}}{n+1}\biggr)^2 \nonumber\\
    && + \biggl(y_{n+1}-\frac{n\bar{y}_n+y_{n+1}} {n+1}\biggr)^2\Biggr]\nonumber\\
    &=& -\ln\Biggl[\sum_{i=1}^{n}\biggl(y_i-\frac{n\bar{y}_n+y_{n+1}}{n+1}\biggr)^2 \nonumber\\
    && + \frac{1}{(n+1)^2} \biggl(ny_{n+1}-n\bar{y}_n\biggr)^2\Biggr].
\end{eqnarray}

Again, differentiating and setting the
derivative equal to zero, i.e., solving
$ \frac{d}{dy_{n+1}}\ln p(y^{n+1}|\hat{\mu}_{n+1}, \hat{\sigma^2}_{n+1}) = 0$, gives 
 \begin{eqnarray}
    && \sum_{i=1}^{n}2\biggl(y_i-\frac{n\bar{y}_n+y_{n+1}}{n+1}\biggr)\biggl(-\frac{1}{n+1}\biggr)+ \frac{2n}{(n+1)^2} \biggl(ny_{n+1}-n\bar{y}_n\biggr) = 0 \nonumber\\
  &&   \implies \hat{y}_{n+1} = \bar{y}_n.
  \end{eqnarray}

Hence, in the normal mean cases we get the same
point predictor, the sample mean of the past data.
We use this in our computations in Sec. \ref{compresults}.  Shtarkov point predictors,
Bayes and frequentist, can be found in many other
exponential families
cases, but not in general in closed form.

\section{Computational comparisons}
\label{compresults}

To present our computational results we
begin by listing our predictors.  Then we 
describe the settings for our comparisons.
Finally, we present our computations and
interpret what they imply about the methods.

\subsection{Our predictors}
\label{methods}

We computationally compare the predictors 
that have
been presented in the earlier sections.
There were two predictors based on hash 
functions.
These HPB's used the mean and the median
of the empirical DF generated by the 
{\sf Count-Min} sketch.  They were
explicitly given by
\eqref{CMpredictionmean} and \eqref{CMpredictionmed} in Sec. \ref{HBPmethod}.
There were three Bayesian predictors
namely GPP's with no bias, GPP's with
a random additive bias, and DPP's.
They were given in 
\eqref{mustarNRB}, Theorem \ref{studentt}, 
and \eqref{DPPPredictor}.
The predictor in \eqref{mustarNRB} requires
the estimation of parameters as discussed
in Subsec. \ref{GPPRV}.
The predictor in Theorem \ref{studentt}
was denoted $A_1$ and was not given explicitly
but $g_1^n$ and $\gamma_1$ can be found
from \eqref{ygamma1} while $\gamma_2$ 
is can be found from \eqref{ygamma2}.
The estimation of parameters required 
to use GPP's with a random additive bias 
is discussed in Subsec. \ref{GPPRB}.
The `parameter' $G_0$ in DPP predictors
has to be chosen and is user dependent.
Finally, we used one frequentist Shtarkov 
point predictor based on
normality.  It was simply the mean,
as derived in Subsec. \ref{shtexp}.

\subsection{Settings for the comparisons}
\label{specifics}

We compare point predictors by their 
cumulative $L^1$ error.  That is, for each 
method,  we have a sequence of errors
$\vert y_i - \hat{y}_i \vert$ where
$\hat{y}_i$ depends on $y_1, \ldots , y_{i-1}$
(and possibly a burn-in set ${\cal{D}}_b$)
and we find the cumulative predictive error
\begin {eqnarray}
CPE = CPE(n) = \frac{1}{n} \sum_{i=1}^n \vert y_i - \hat{y}_i \vert .
\label{CPE}
\end{eqnarray}
It is seen that
\begin{eqnarray}
CPE(n+1) =  \frac{1}{n+1} \left( n CPE(n) + \vert y_{n+1} - \hat{y}_{n+1} \vert \right)
\nonumber
\end{eqnarray}
so it is easy to update the CPE from time step to
time step.
For each method, we report the final $CPE$.

Since we are using HBP's, it is natural to
exploit the fact they can be computed in
one pass.  We can do this readily for the
Shtarkov predictor and the DPP predictor.
However, it is difficult to do this for either
of the GPP predictors because the variance
matrix increases in dimension.  

So, to include GPP's in our comparisons
we have to ensure the variance matrices in the 
GPP predictors do not increase excessively.
We do this by using what we call a representative
subset of fixed size that is updated from time
step to time step.  Essentially, we use
the cluster centers from streaming $K$-means
for a fixed choice of $K$, here $K=200$.
Under streaming $K$-means, the cluster centers 
at time step $n$ update easily to give
the cluster centers at time step $n+1$.
We then use the cluster centers at time $n$
as the data to form our predictors for time $n+1$.

Thus, we have two sets of comparisons of
$CPE$'s, one for the four methods that can 
be implemented
in one pass and another for all six methods
using streaming representative subsets.
In fact, we compare all of them in an effort to understand how the various methods behave.

We use different forms of the three predictors
for different data sets.  However, the quantities
that must be chosen are the same in all cases.
For the HPB methods (mean and median), 
we must choose
$K$, $d_K$, and $W_K$.  For the Bayesian
methods our choices are as follows.
For GPP, we chose the variance matrix 
$K_{n\times n}$ to be of the form of a 
correlation matrix for an $AR(1)$ time series.
That is, for given correlation $\rho$ we used
\begin{eqnarray}
K_{n \times n} = 
\left(\begin{array}{cccc}
1 & \rho & \cdots & \rho^{n-1}\\
\rho & 1 & \cdots & \rho^{n-2}\\
\vdots & \vdots & \ddots & \vdots\\
\rho^{n-1} & \rho^{n-2} & \cdots & 1
\end{array}\right)
\nonumber
\end{eqnarray}
and set $\rho = .8$.  
For the GPP with random bias, we used
$A_1$ as our point predictor and so
only had to find
values for $g_1^n$, $\gamma_1$, $\gamma_2$,
$\delta$ and $\sigma$.  We listed our choices 
at the end of Subsec. \ref{GPPRB}.
For DPP's we chose the base measure
$G_0$ to be a Discrete Uniform Distribution 
on the range $[\min\{y_1, \ldots, y_n\},
\max\{y_1, \ldots , y_n\}]$.  For the Shtarkov
predictor in the normal case, we got an 
expression involving data only. The predictor was fully specified once the family of experts 
and weighting had been fixed.

To finish the general specifications of our
predictors, we initialized all our sequences
of predictions using 10\% of the total data we
intended to use.  Thus, in the {\sf Columbia}
rainfall data below where we had $n=5000$, we
used a burn-in of the first 500 data points
to form the predictions for each of
the 501 time step.
These then gave us the first terms in our $CPE$'s
for the ten methods.

\subsection{Results}
\label{results}

In this subsection we use the ten methods 
described in Subsec. \ref{methods} on
four real data sets.  The first three are rainfall
data sets from three jurisdictions here
called {\sf Columbia}, {\sf India}, and
{\sf Bhubaneshwar}.  Note that because our
methods are designed for ${\cal{M}}$-open
problems, simulated data will 
not be complex enough in general.  Indeed,
in other computational work not shown here,
we found a very different ordering of
techniques by performance.  The techniques
designed for streaming data performed
relatively poorly.

Another comment:  There are many classes of
complex data and it is at this time nearly
impossible to characterize when each 
(or indeed any) method works better than the
others.  So, we have chosen a collection
of data sets that, as will be seen,
illustrate our general points.

The {\sf Columbia} data set can be
found at \url{https://data.world/hdx/f402d5ef-4a74-4036-8829-f04d6f38c8e9}.  The dataset contains daily values of total precipitation (mm) in 
Columbia over a period of four years 
ending in the year 2013.  They were collected 
from 27 different base stations and the `time' 
index cycles through them.  We suggest this 
cycling will be typical of many kinds of streaming
data. We use the first 
5000 rows of the {\sf value} column of the 
dataset.  This data set, like {\sf India}
below, is not
a pure time series -- it's as if there were a 
mixing distribution over the stations.
However, there is a pattern that would allow
prediction so this is a fair test of how
well a predictor can perform on complex data.

The {\sf India} data set is similar and
can be found at \url{https://data.world/hdx/687c4f99-6ec6-4b30-ada2-a5a0f9eac629}. Parallel to {\sf Columbia}, 
this dataset contains values of daily total 
precipitation (mm) cycling over 76 different 
base stations. Measurements of total 
precipitation for a two year period (2010-2011)
can be found in the dataset.  Again we use 
the first 5000 rows of the {\sf value} 
column of the dataset
For the HBP's computations with
$\sf Columbia$ and $\sf India$, $d_K =10$,
$W_K = 50$, and $K = 100$.
We set $K= \lceil n/50 \rceil$ in all cases
but simply chose
$d_K$ and $W_K$ larger for larger sample sizes.

The {\sf Bhubhneshwar} data set can be found at \url{https://www.kaggle.com/datasets/vanvalkenberg/historicalweatherdataforindiancities}. It
has daily precipitation data (mm) 
from 01/01/1990 to 07/20/2022. The column 
{\sf prcp} was used for getting the values 
of CPE. Rows with missing values were 
deleted leaving 
6838 data points.  
For the computations with $\sf Bhubhneshwar$, $d_K = 15$, $W_K = 50$, $K = 137$.
This data set like 
{\sf accelerometer} below, is a time series.

The fourth data set is drawn from the phones
{\sf accelerometer} benchmark data that
can be found at \cite{datasource}, 
which provides a complete 
description.  
We extracted the first 10,000
rows of the data set and used the column $``y"$ for our computing. We split the data into 
four parts, i.e., sets of 2500 each,
and computed the results. 
For the HBP computations with 
{\sf accelerometer}, $d_K = 10$, $W_K = 20$, and $K = 50$.

In our tables, we follow the convention that
the numbers in \textbf{bold} denote the smallest errors and the asterisk (*) represents the second best. Headings indicate whether the error
in a column is from a one-pass method or used
a representative subset from $K$-means.  We abbreviate the names of our methods as Sht, DPP, GPP(RB) and GPP(no RB) 
to mean the Shtarkov (Normal), Dirichlet process prior, and GPP with and without random bias, respectively.

Turning to the numerical results, we begin with
the $CPE$'s for {\sf Columbia} given in Table \ref{columbia}.  In this case we see exactly
the pattern of errors that we expect.
Namely, the one-pass HBP 
median has the lowest error
and GPP(RB) has the second lowest error.
The other methods performed notably worse.
We attribute the good performance of GPP(RB)
to the extra spread from the random additive bias
and the poor performance of Shtarkov to
its extreme simplicity.

\begin{table}[ht!]
\begin{tabular}{@{}llllllllll@{}}
\multicolumn{3}{c}{One pass} &
      \multicolumn{7}{c}{Representative}   
\\\cmidrule(lr){1-4}\cmidrule(lr){5-10}
\begin{tabular}[c]{@{}l@{}}HBP\\ (Mean)\end{tabular} & \begin{tabular}[c]{@{}l@{}}HBP\\ (Median)\end{tabular} & \begin{tabular}[c]{@{}l@{}}Sht\\ \end{tabular} & DPP      & \begin{tabular}[c]{@{}l@{}}HBP\\ (Mean)\end{tabular} & \begin{tabular}[c]{@{}l@{}}HBP\\ (Median)\end{tabular} & \begin{tabular}[c]{@{}l@{}}Sht\\ \end{tabular} & DPP      & \begin{tabular}[c]{@{}l@{}}GPP\\ (RB)\end{tabular} & \begin{tabular}[c]{@{}l@{}}GPP\\ (no RB)\end{tabular} \\ \midrule
1006.8                                             & \textbf{944.8}                                               & 986.8                                                  & 989.1 & 1049.7                                           & 960.1                                               & 959.7                                                 & 985.8 & 947.2*                                           & 1000.0                                              \\ \bottomrule
\end{tabular}
\caption{\label{columbia} Final $CPE$'s for 
the ten predictors using the {\sf Columbia} rainfall data. }
\end{table}

Table \ref{India} presents the final $CPE$'s
for the {\sf India} data.  It is seen that
the best methods have $CPE$ around 1050.
These are one pass HBP median, one-pass Shtarkov,
and GPP (RB).  The only possible
surprise here is that
one-pass Shtarkov is doing so well.  However,
this does not contradict our basic
inference that the top methods for 
the class of data this example represents
are one-pass HBP median and GPP(RB).

\begin{table}[ht!]
\begin{tabular}{@{}llllllllll@{}}
\multicolumn{3}{c}{One pass} &
      \multicolumn{7}{c}{Representative}   
\\\cmidrule(lr){1-4}\cmidrule(lr){5-10}
\begin{tabular}[c]{@{}l@{}}HBP\\ (Mean)\end{tabular} & \begin{tabular}[c]{@{}l@{}}HBP\\ (Med)\end{tabular} & \begin{tabular}[c]{@{}l@{}}Sht\\ \end{tabular} & DPP      & \begin{tabular}[c]{@{}l@{}}HBP\\ (Mean)\end{tabular} & \begin{tabular}[c]{@{}l@{}}HBP\\ (Median)\end{tabular} & \begin{tabular}[c]{@{}l@{}}Sht\\ \end{tabular} & DPP      & \begin{tabular}[c]{@{}l@{}}GPP\\ (RB)\end{tabular} & \begin{tabular}[c]{@{}l@{}}GPP\\ (no RB)\end{tabular} \\ \midrule
1231                                             & 1052                                            & \bf{1049}                                              & 1151 & 1120                                              & 1171                                               & 1227                                              & 1237 & 1050*                                           & 1066                                            \\ \bottomrule
\end{tabular}
\caption{\label{India} Final $CPE$'s for 
the ten predictors using the {\sf India} rainfall 
data. }
\end{table}

Table \ref{Bhuv} presents the final $CPE$'s
for the {\sf Bhubhneshwar} data.  It is seen that
again the one-pass HBP median predictor
is best and ties with the representative
HBP median predictor.  Second place goes to GPP
with no RB rather than GPP (RB), although
GPP (RB) comes in third.  

\begin{table}[ht!]
\begin{tabular}{@{}llllllllll@{}}
\multicolumn{3}{c}{One pass} &
      \multicolumn{7}{c}{Representative}   
\\\cmidrule(lr){1-4}\cmidrule(lr){5-10}
\begin{tabular}[c]{@{}l@{}}HBP\\ (Mean)\end{tabular} & \begin{tabular}[c]{@{}l@{}}HBP\\ (Med)\end{tabular} & \begin{tabular}[c]{@{}l@{}}Sht\\ \end{tabular} & DPP      & \begin{tabular}[c]{@{}l@{}}HBP\\ (Mean)\end{tabular} & \begin{tabular}[c]{@{}l@{}}HBP\\ (Median)\end{tabular} & \begin{tabular}[c]{@{}l@{}}Sht\\ \end{tabular} & DPP      & \begin{tabular}[c]{@{}l@{}}GPP\\ (RB)\end{tabular} & \begin{tabular}[c]{@{}l@{}}GPP\\ (no RB)\end{tabular} \\ \midrule
 9.619                                              & \textbf{7.102}                                               & 9.633                                                 & 9.934 & 10.09                                               & \textbf{7.102}                                                  & 10.077                                                & 9.609 & 8.613                                              & 7.460*                                                 \\ \bottomrule
\end{tabular}
\caption{\label{Bhuv} Final $CPE$'s for 
the ten predictors using the {\sf Bhubhneshwar}
rainfall data. }
\end{table}

To explain this, we suggest that the 
{\sf Bhubhneshwar} data
is slightly less complex than the
{\sf Columbia} data and hence a little bit 
easier to predict.  
So, we plotted the 
{\sf Columbia} and {\sf Bhubhneshwar} data
as time series.  This is shown in Fig. \ref{fig1}.
Although hard to see at the
scale of this plot, the {\sf Bhubhneshwar} data
shows more regularity than the {\sf Columbia}
data which looks much more patternless.   
Since patterns can indicate structure to
improve prediction, the prediction problem
of {\sf Bhubhneshwar} may be a little easier
so that the random bias term does not help
the GPP.  The pattern in the time series plot
may also ensure that a representative subset
from $K$-means really is representative enough
to help prediction substantially.

\begin{figure}[ht]
  \centering

\includegraphics[width=135mm,scale=.7]
{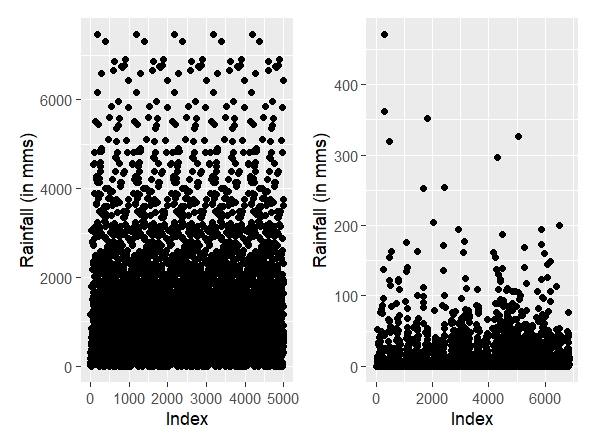}
\caption{\label{fig1} Left: Plot of the
{\sf Columbia} data as a time series.
Right: Plot of the {\sf Bhubhneshwar} data
as a time series.  } 
\end{figure}

Table \ref{realtable} presents the final $CPE$'s
for the {\sf accelerometer} data.  The third 
and fourth rows are in accord with our 
expectations, except that the two methods,
one-pass median HBP and GPP (RB) tie
and tie with other methods including the
Shtarkov predictor with a representative subset.

Rows one and two bear some comment.
First, row two shows that the one-pass HBP median
performs best but the GPP (RB) is third,
being outperformed by the DPP which is second.
Second, row one shows GPP (RB) is best
while one-pass HBP median does worse than some
methods (although better than others).

\begin{table}[ht!]


\begin{tabular}{@{}llllllllll@{}}
\multicolumn{3}{c}{One pass} &
      \multicolumn{7}{c}{Representative}   
\\\cmidrule(lr){1-4}\cmidrule(lr){5-10}
\begin{tabular}[c]{@{}l@{}}HBP\\ (Mean)\end{tabular} & \begin{tabular}[c]{@{}l@{}}HBP\\ (Median)\end{tabular} & Sht   & DPP    & \begin{tabular}[c]{@{}l@{}}HBP\\ (Mean)\end{tabular} & \begin{tabular}[c]{@{}l@{}}HBP\\ (Median)\end{tabular} & Sht   & DPP    & \begin{tabular}[c]{@{}l@{}}GPP\\ (RB)\end{tabular} & \begin{tabular}[c]{@{}l@{}}GPP\\ (no RB)\end{tabular} \\ \midrule
0.326*                                               & 0.337                                                  & 0.770 & 0.335  & 0.339                                                & 0.347                                                  & 0.328 & 0.335  & \bf{0.308}                                              & 0.339                                                 \\
0.045                                                & \bf{0.032}                                                  & 0.204 & 0.042  & 0.155                                                & 0.070                                                  & 0.076 & 0.040* & 0.074                                              & 0.353                                                 \\
0.069                                                & \bf{0.026}                                                  & 0.094 & 0.027* & 0.065                                                & \bf{0.026}                                                  & 0.029 & \bf{0.026}  & \bf{0.026}                                              & 0.395                                                 \\
0.084                                                & \bf{0.026}                                                  & 0.099 & 0.027* & 0.031                                                & \bf{0.026}                                                  & \bf{0.026} & \bf{0.026}  & 0.027*                                             & 0.383                                                 \\ \bottomrule
\end{tabular}

\caption{\label{realtable} Final $CPE$'s for 
the ten cases using the {\sf accelerometer} data.  Each row of numbers corresponds to a quarter of the first 10,000 data points, in order.}
\end{table}

To explore rows one and two further we plotted
histograms of the two quarters of
data in Fig. \ref{fig2}.
The histogram of the first quarter is trimodal.
It is possible that this makes the data set
broader and hence better captured by GPP (RB)
because the random bias matches the spread of
the data more easily.  The other methods, 
being more purely locations may just not be able to
capture the modes at all -- indeed it is seen that
the range of errors is narrow namely
[.326, .347]
except for one-pass Shtarkov that has error .770
and is completely insensitive to spread.

The histogram of the second quarter is unimodal
with roughly symmetric tails that do not look
heavy.  In fact, it is close to normal apart from
a little bit of left skew.  The one-pass HBP median
may simply converge relatively quickly to a
distribution that matches the histogram while
the other converge more slowly or have their
performances harmed by the skew.

\begin{figure*}[ht!]
  \centering
\includegraphics[width=100mm,scale=0.7]{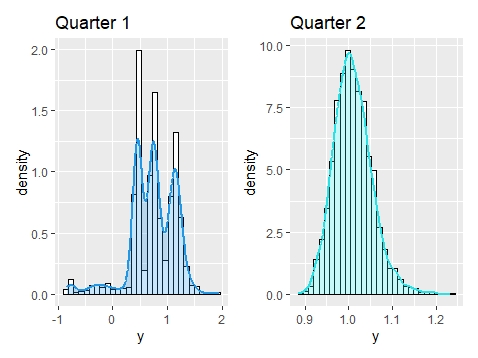}
\caption{\label{fig2}Plot of first two quarters of
the {\sf accelerometer} data.} 
\end{figure*}

\section{Discussion}
\label{discussion}

The main contribution of this paper is
to present and evaluate several predictive 
techniques for complex data, specifically
${\cal{M}}$-open data.  We presented two
new techniques a hash function based predictor
using the {\sf Count-Min} sketch
and a Gaussian process prior with random 
additive bias predictor.  We gave
some of the key properties of these
predictors.  In addition, we derived a 
Shtarkov based
predictor with normal experts.

We compared these predictors computationally.
In our work
we distinguished two cases -- methods that
were one pass and methods that relied on
a representative data set, here obtained as 
the cluster centers from streaming $K$-means.
We argued that for the most complex data,
the one-pass hash function based median
predictor typically performed best in
an $L^1$ sense -- at least for the data
sets we used here.  The Gaussian process
prior with random additive bias predictor
sometimes tied with the one-pass hash-based
median predictor but more often performed
slightly worse; perhaps this occcurs because GPP's represent more of a model than HBP's do. The Shtarkov predictor
sometimes performed well but was likely too
simple a version of the more general class
of Shtarkov predictors to perform well
reliably.  For the data we used, we regard
it as coming third place.
The other methods were not generally intended
for streaming ${\cal{M}}$-open data
and typically did not perform as well
as these three methods.

An important caveat to all our conclusions is
that the class of complex or ${\cal{M}}$-open
data is huge and there are doubtless many 
subclasses that can be defined.  The essential
and only common property the ${\cal{M}}$-open 
data class has is that there is no model
that can be usefully formulated for it.
Consequently, we regard a lot of streaming data
as ${\cal{M}}$-open.  The two new methods we
have proposed here seem best designed for
this sort of data but we are as yet unable to
be more precise as to which subclass of
${\cal{M}}$-open data our methods are
most appropriate for.  Indeed, our
examples only show that there is a large
subclass of data sets for which our methods
are possibly best.  In other work not discussed
here we have found data sets where our methods
perform poorly.  Usually, these are obviously
not ${\cal{M}}$-open e.g., ${\cal{M}}$-closed
meaning they do follow a model, however complex,
that we can identify.  Hence we have not 
shown any simulation results.

As a final point, we ask if there are
data sets that are so complex 
that stochastic assumptions become too strong
for them to satisfy.
After all, assuming a data generator follows
a probability model means that there are
large collections of strings of data that 
get probability zero.
Consider a sequence of real valued responses
$x_1, \ldots, x_n, \ldots$.  If they are
from a $N(0,1)$ then we cannot have
$\bar{x} \rightarrow c$ for any $c \neq 0$.
That is, countable sequences that 
are the limit points of sets having
positive probability in the $n$-fold sample
spaces must correspond to 
a hyperplane in $\mathbb{R}^{\aleph_0}$.
However, the vast majority of $\mathbb{R}^{\aleph_0}$ gets probability zero.
This conflicts with increasing complexity
because we expect that
as a response becomes more complex, 
it should assume a greater number of values.
Indeed, countable sequences 
with a limit are themselves only a fraction 
of the points in $\mathbb{R}^{\aleph_0}$
and it is conceivable that some data
sources may routinely give outcomes that
simply do not have a limit in any commonly 
used sense.

\section{Appendix}
\label{GPderive}

In this Appendix we present the proofs from 
Subsubsec. \ref{streamingGC} and Subsec.
\ref{GPPRB}.  We begin by proving
Theorem \ref{GCtheorem}.  Then we turn to
proofs for 
Theorem \ref{studentt} and 
Theorem \ref{gammadelta}.

\subsection{Proof of Theorem \ref{GCtheorem}}

\begin{proof}
Throughout the proof we treat $n$, $K$, $d_K$, 
and $W_K$
as fixed but large, only letting them 
increase compatibly at the end of each result. 
We begin with the proof of Clause I; it requires the following.

\begin{l1}
\label{lemma1gl}
Let $F$ be an arbitrary distribution function 
on $[m,M]$ and let $\epsilon >0$. Then, there is
a finite partition of the form 
$m = u_0 < u_1 < \cdots < u_L = M$ so that for 
$0 \leq j \leq L-1$ we have
\begin{equation*}
    F(u-_{j+1}) - F(u_j) \leq \epsilon.
\end{equation*} 
\end{l1}

\noindent
\textbf{Proof of \ref{lemma1gl}:} For $j \geq 0$; define
\begin{equation}
    u_{j+1} = \mathrm{Sup}\{v: F(v) \leq F(u_j)+\epsilon \} .
    \label{l1eq}
\end{equation} 
From Eqn \eqref{l1eq}, we have
$F(u_{j+1}) \geq F(u_j)+\epsilon$. Thus the 
height of jump between $F(u_{j+1})$ and $F(u_j)$ 
is at least $\epsilon$. This can happen only a finite number of times and so we get the partition 
of the desired form. $\quad \square$

\noindent
\textbf{Proof of Clause \#1.} For $\epsilon > 0$, we 
must find $N = N(\epsilon)$ so that for $n > N$ 
the event
\begin{eqnarray}
    \sup_{y_i} |F_{n}(y_i)-F(y_i)| < \epsilon
    \label{clause1event}
\end{eqnarray}
has probability going to one.
Consider a finite partition of $[m, M]$
as in Lemma \ref{lemma1gl} giving
\begin{equation}
    F(u^-_{j+1}) - F(u_j) \leq \frac{\epsilon}{2} .\label{eps}
\end{equation}
for $0 \leq j \leq L-1$.
Now, for $y_i \in [m, M]$ there is a $j$ so that 
$u_j \leq y_i \leq u_{j+1}$.  So, we have
      \begin{equation}
          F_{n}(u_j) \leq F_{n}(y_i) \leq F_{n}(u^-_{j+1}) \label{l2fn}
      \end{equation} 
and
      \begin{equation}
          F(u_j) \leq F(y_i) \leq F(u^-_{j+1}).
          \label{l2f}
      \end{equation} 
Together,\eqref{l2f} and \eqref{l2fn} imply
      \begin{equation*}
          F_{n}(u_j)- F(u^-_{j+1})\leq F_{n}(y_i)-F(y_i) \leq F_{n}(u^-_{j+1}) - F(u_j).
      \end{equation*}
Re-arranging gives
      \begin{equation}
         F_{n}(u_j)-F(u_j)+F(u_j)- F(u^-_{j+1})
         \leq F_{n}(y_i)-F(y_i)  
         \label{adj1}
      \end{equation}
and
      \begin{equation}
          F_{n}(u^-_{j+1}) - F(u^-_{j+1})+F(u^-_{j+1})- F(u_j) 
          \geq F_{n}(y_i)-F(y_i).
          \label{adj2}
      \end{equation}
Using \eqref{eps} in \eqref{adj1} and \eqref{adj2} 
we have
      \begin{center}
     $F_{n}(u_j)-F(u_j)-\frac{\epsilon}{2} 
     \leq F_{n}(y_i)-F(y_i)$
     
    $F_{n}(u^-_{j+1}) - F(u^-_{j+1})+\frac{\epsilon}{2} \geq F_{n}(y_i)-F(y_i)$.
\end{center}
For each $j$, let $N_j = N_j(\epsilon)$ be such that 
for $n > N_j$ we have
$F_{n}(u_j)-F(u_j) > -\frac{\epsilon}{2}$
and let $M_j = M_j(\epsilon)$ be such that 
for $n > M_j$   we have
  \begin{center}
        $F_{n}(u^-_j) - F(u^-_j) < \frac{\epsilon}{2}$.
  \end{center}
Write $N= \underset{1\leq j\leq K}{\mathrm{max}} \{N_j,M_j\}$. Then, for $n > N$ and for any $x \in [m,M]$
we get that \eqref{clause1event} has probability
going to zero.  Hence, Clause \#1 follows.

\medskip

The proof of Clause II requires the following.

\medskip

\begin{l1}
\label{lemma2gl}
Suppose $F_{n}$ and $F$ are DF's on $[m,M]$
so that $F_{n}(y_i) \longrightarrow F(y_i)$ pointwise in
probability for $y_i \in \mathbb{Q}$. 
Suppose also that the jump points of $F$ are well-behaved,
i.e., $F_{n}(y_i)-F_{n}(y_i^-) \longrightarrow F(y_i)-F(y_i^-)$ 
pointwise in probability
for all jump points of $F$. 
Then, for all $y_i$, we have
\begin{equation*}
    F_{n}(y_i) \longrightarrow F(y_i) 
\end{equation*} and 
\begin{equation*}
   F_{n}(y_i^-) \longrightarrow F(y_i^-). 
\end{equation*}
in the probability on the data.
\end{l1}

\noindent
\textbf{Proof of Lemma \ref{lemma2gl}: }
For any $y_i \in (0,M]$ and $w, z \in \mathbb{Q}$ 
with $w < y_i < z$. We have
\begin{center}
    $F_{n}(w) \leq F_{n}(y_i) \leq F_{n}(z)$.
\end{center}
So, if $y_i$ is a continuity point we have
\begin{center}
    $F(w) \leq \underset{n \longrightarrow \infty}{ess. \liminf} F_{n}(y_i) \leq \underset{n \longrightarrow \infty}{ess. \limsup} F_{n}(y_i) \leq F(z)$.
\end{center}  
and the lemma follows.
Likewise, if $y_i$ is a jump point of $F$ we have
\begin{center}
    $F_{n}(w)+F_{n}(y_i)-F_{n}(y_i^-) \leq F_{n}(y_i) \leq F_{n}(z)$
\end{center}
and hence taking convergences in the distribution 
for the data
\begin{center}
$F(w)+F(y_i)-F(y_i^-) 
\leq 
\underset{n \longrightarrow \infty}{ess \liminf} F_{n}(y_i) 
\leq 
\underset{n \longrightarrow \infty}{ess \limsup} F_{n}(y_i) 
\leq F(z)$.
\end{center}
Since $\underset{w\rightarrow y_i^-} \lim F(w) = F(y_i^-)$ and
$\underset{z\rightarrow y_i^+}\lim F(w) = F(y_i)$ the first
part follows.

To show $F_{n}(y_i^-) \longrightarrow F(y_i^-)$ in 
probability,
let $x$ be a continuity point of $F$. Since, $F_{n}(y_i^-) \leq F(y_i)$, it follows that
\begin{center}
    $\underset{n \longrightarrow \infty}{ess \limsup} F_{n}(y_i^-) \leq \underset{t \longrightarrow \infty}{ess \limsup} F_{n}(y_i) \leq F(y_i) \leq F(y_i^-)$.
\end{center}
Again, for any $w \in \mathbb{Q}$ with $w < y_i$, we have $F_{n}(w) \leq F_{n}(y_i^-)$ which implies that
\begin{center}
    $F(w) \leq \underset{n \longrightarrow \infty}{ess \liminf} F_{n}(y_i^-)$.
\end{center}
Since $\underset{w \longrightarrow y_i^-} {\lim} F(w) 
= F(y_i^-)$ the statement follows.
Finally, support again that $y_i$ is a jump point of 
$F$. By assumption $F_{n}(y_i)-F_{n}(y_i^-) \longrightarrow F(y_i)-F(y_i^-)$ and it has been shown that $F_{n}(y_i) \longrightarrow F(y_i)$.
Thus, $F_{n}(y_i^-) \longrightarrow F(y_i^-)$. 
$\quad \square$

\medskip

\textbf{Proof of Clause \#2.} We must show that there 
is a set $N$ with $P\{N\} = 0$ so that for all 
$w \notin N$ 
(i) $\hat{F}_{n}(y_i) \longrightarrow F(y_i)$ for
$y_i \in \mathbb{Q}$ and (ii) $\hat{F}_{n}(y_i)-\hat{F}_{n}(y_i^-)\longrightarrow F(y_i)-F(y_i^-)$ for all jump points of $F$, both in the
joint probabilities of $P_\mathcal{H}$ of $H$ and $P$ of the data where $P$ has DF $F$.

To do this, recall that we have $\hat{F}_{n}(y_i) \longrightarrow F(y_i)$ in probability from Cor. \ref{cor1}. 
Since,$\hat{F}_{n}(y_i)$ is bounded that will imply 
$\mathcal{L}_1$ convergence which again implies 
convergence almost surely. 
So, for each $y_i \in \mathbb{Q}$, let $N_{y_i}$ be a set satisfying $P\{N_{y_i}\} = 0$ 
and $\hat{F}_{n}(\omega; y_i) \longrightarrow F(y_i)$ for all $\omega \notin N_{y_i}$. Define $N_1 = \underset{y_i \in \mathbb{Q}}{\bigcup}N_{y_i} $. 
By construction, for all $\omega \notin N_1$, 
$\hat{F}_n (\omega; y_i) \longrightarrow F(y_i)$. Since, $\mathbb{Q}$ is countable, $P\{N_1\} = 0$.

Next, define $J_i$ as the set of jump points of 
$F$ of size at least $1/i$; for $i \geq 1$. For 
each $i, J_i$ is finite.  Now, 
$J = \underset{1 \leq i \leq \infty}{\bigcup}J_i $ 
is the set of all jump points of $F$. 
Let $M_{y_i}$ satisfy $P\{M_{y_i}\} = 0$ so that for all 
$\omega \notin M_{y_i}$ we have 
$\hat{F}_{n}(\omega; y_i)-\hat{F}_{n}(\omega; y_i^-) \longrightarrow F(y_i)-F(y_i^-)$ in probability.
Let $N_2 = \underset{y_i \in J}{\bigcup}M_{y_i}$. 
Since $J$ is countable, $P\{N_2\} = 0$. 

To finish,
set $N = N_1 \bigcup N_2$ so that by construction 
for $\omega \notin N$, (i) and (ii) hold. Hence
Clause I and Lemma \ref{lemma2gl} give Clause II.

\medskip

\textbf{Proof of Clause \#3.}This proof is similar 
to the proof of Clause \#2. 
First note that Cor. \ref{cor1}
gives $\hat{F}_{n}(y_i) \longrightarrow F_{n}(y_i)$ in probability. Now, since $\hat{F}_{n}(y_i)$ is a bounded function, convergence in probability implies convergence in $L_1$ norm and, hence, convergence almost sure.

To strengthen the mode of convergence to 
Kolmogorov-Smirnov distance, begin by defining 
$T_{y_i}$ to be a set such that $P\{T_{y_i}\} = 0$ and 
for all $\omega \notin T_{y_i}$, 
$\hat{F}_{n}(y_i) \longrightarrow F_{n}(y_i)$. 
Also define $T_1 = \underset{y_i \in \mathbb{Q}}{\bigcup}T_{y_i} $. As before, $P\{T_1\} = 0$. 
Thus, for all $\omega \in T_1$, 
$\hat{F}_{n}(\omega; y_i) \longrightarrow F_{n}(y_i)$. 

Again as before, let $J_i$ be the set of jump 
points of $F_{n}$ of size at least $1/i$ so
that $J_i$ is finite for each $i$ and let 
$J =  \underset{1 \leq i \leq \infty}{\bigcup}J_i$. Now, for each $y_i \in J$, let $M_{y_i}$ denote a set such that 
$P\{M_{y_i}\} = 0$ and for all $\omega \notin M_{y_i}$, $\hat{F}_{n}(y_i)-\hat{F}_{n}(y_i^-) \longrightarrow F_{n}(y_i)-F_{n}(y_i^-)$. Also, let 
$T_2 = \underset{y_i \in J}{\bigcup}M_{y_i}$. Since, 
$J$ is countable, $P\{T_2\} = 0$. 

To finish, let $T = T_1 \bigcup T_2$. 
By construction, for $\omega \notin T$, 
$\hat{F}_{n}(y_i) \longrightarrow F_{n}(y_i)$ and 
$\hat{F}_{n}(y_i)-\hat{F}_{n}(y_i^-) \longrightarrow F_{n}(y_i)-F_{n}(y_i^-)$. So, by Clause I of the Theorem and by Lemma \ref{lemma2gl} Clause III follows. $\quad \square$
\end{proof}

\subsection{Proof of Theorem \ref{studentt}}

\begin{proof}
We use $p$ generically to indicate probability 
densities. We
use $w$ when we want to emphasize that a density is a prior or posterior and $m$ to emphasize that a density is a mixture of densities for its indicated arguments. Now, the posterior density for $a^n,\sigma^2|y^n$ is given by:
\begin{eqnarray}
  p(a^n,\sigma^2|y^n) &=& \frac{\mathcal{L}_1(a, \sigma^2|y^n)\times w(a^n,\sigma^2)}{m(y^n)}
  = \frac{p(y^n,a^n,\sigma^2)}{m(y^n)}.
  \label{closedpost}
\end{eqnarray}. 
We know that
\begin{eqnarray}
    Y^n \sim \mathcal{N}(a^n,\sigma^2(I_{n \times n}+K_{n \times n}))
    \label{likey}
\end{eqnarray} 
and
\begin{eqnarray}
     w(a^n,\sigma^2) &=& \mathcal{N}(\gamma1^n,\sigma^2\delta^2 I_{n \times n})\textbf{ }\mathcal{IG}(\alpha,\beta)
    \nonumber\\
    &=&   \frac{e^{-\frac{1}{2\sigma^2}(a^n-\gamma1^n)'(\delta^2I_{n \times n})^{-1}(a^n-\gamma1^n)}}{(2\pi)^\frac{n}{2}(\sigma^2\delta^2)^\frac{n}{2}}\frac{\beta^\alpha}{\Gamma(\alpha)}\left(\frac{1}{\sigma^2}\right)^{\alpha+1}e^{-\frac{\beta}{\sigma^2}} .
    \label{jtprior}
\end{eqnarray}
From \eqref{likey} and \eqref{jtprior} the numerator in \eqref{closedpost} is given by
\begin{eqnarray}
   p(a^n,\sigma^2|y^n) &=& \frac{e^{-\frac{1}{2\sigma^2}(a^n-\gamma 1^n)'(\delta^2I_{n \times n})^{-1}(a^n-\gamma 1^n)}}{(2\pi)^\frac{n}{2}(\sigma^2\delta^2)^\frac{n}{2}}
   \nonumber \\
    &\times& \frac{\beta^\alpha}{\Gamma(\alpha)}\left(\frac{1}{\sigma^2}\right)^{\alpha+1}e^{-\frac{\beta}{\sigma^2}}\frac{e^{-\frac{1}{2\sigma^2}(y^n-a^n)'(I+K)_{n \times n}^{-1}(y^n-a^n)}}{(2\pi)^\frac{n}{2}(\sigma^2)^\frac{n}{2}|(I+K)_{n \times n}|^\frac{1}{2}}
     \nonumber\\
     &=& \frac{\beta^\alpha}{(2\pi)^{\frac{n}{2}+\frac{n}{2}}|(I+K)_{n \times n}|^{\frac{1}{2}}(\delta^2)^\frac{n}{2}\Gamma(\alpha)}
     \nonumber \\
     &\times& e^{-\frac{1}{\sigma^2}[\beta+\frac{1}{2}(y^n-a^n)'(I+K)_{n \times n}^{-1}(y^n-a^n)+\frac{1}{2}(a^n-\gamma 1^n)'(\delta^2I_{n \times n})^{-1}(a^n-\gamma 1^n)]} \nonumber\\
     && \times \left(\frac{1}{\sigma^2}\right)^{\alpha+1+\frac{n}{2}+\frac{n}{2}}.
     \nonumber\\
     \label{proppost}
\end{eqnarray}
We simplify the terms in the exponent in \eqref{proppost} as follows. It is
\begin{eqnarray*}
 && 
\beta +\frac{1}{2}(y^n-a^n)'(I+K)_{n \times n}^{-1}(y^n-a^n)+\frac{1}{2}(a^n-\gamma1^n)'(\delta^2I_{n \times n})^{-1}(a^n-\gamma 1^n)
\nonumber\\
&=& \beta + \frac{1}{2}y'^n(I+K)_{n \times n}^{-1} y^n - \frac{1}{2}a'^n(I+K)_{n \times n}^{-1}y^n - \frac{1}{2}y'^n(I+K)_{n \times n}^{-1} a^n + \frac{1}{2}a'^n(I+K)_{n \times n}^{-1}a^n\nonumber\\ 
&&+ \frac{1}{2}a'^n (\delta^2I_{n \times n})^{-1}a^n  -\frac{1}{2}\gamma 1'^n(\delta^2I_{n \times n})^{-1}a^n -\frac{1}{2}\gamma a'^n(\delta^2)^{-1}1^n +\frac{1}{2}\gamma^2 1'^n(\delta^2I_{n \times n})^{-1}1^n
\nonumber\\
&=& \beta + \frac{1}{2}a'^n[(I+K)_{n \times n}^{-1}+(\delta^2I_{n \times n})^{-1}]a^n -a'^n[(I+K)_{n \times n}^{-1}y^n + \gamma(\delta^2I_{n \times n})^{-1}1^n] 
\nonumber \\
&& + \frac{1}{2}y'^n(I+K)_{n \times n}^{-1}y^n + 
\frac{1}{2}\gamma^2 1'^n(\delta^2I_{n \times n})^{-1}1^n
\nonumber\\ 
&=& \beta + \frac{1}{2}a'^n[\{(I+K)_{n \times n}^{-1}+(\delta^2I_{n \times n})^{-1}\}^{-1}]^{-1}a^n
\nonumber\\
&& - a'^n[\{(I+K)_{n \times n}^{-1}+(\delta^2I_{n \times n})^{-1}\}^{-1}]^{-1}[(I+K)_{n \times n}^{-1}+(\delta^2I_{n \times n})^{-1}]^{-1}
\nonumber\\
&&[(I+K)_{n \times n}^{-1}y^n+\gamma(\delta^2I_{n \times n})^{-1}1^n]+ \frac{1}{2}y'^n(I+K)_{n \times n}^{-1}y^n + \frac{1}{2}\gamma^2 1'^n(\delta^2I_{n \times n})^{-1}1^n .
\end{eqnarray*}
So
\begin{eqnarray}
&& w(a^n,\sigma^2|y^n) 
\nonumber \\
&=& \frac{\beta^\alpha}{(2\pi)^{\frac{n}{2}+\frac{n}{2}}|(I+K)_{n \times n}|^{\frac{1}{2}}(\delta^2)^\frac{n}{2}\Gamma(\alpha)}\left(\frac{1}{\sigma^2}\right)^{\alpha+1+\frac{n}{2}+\frac{n}{2}} \cdot
 \nonumber\\
 && \times e^{-\frac{1}{\sigma^2}[\beta + \frac{1}{2}a'^n[\{(I+K)_{n \times n}^{-1}+(\delta^2I_{n \times n})^{-1}\}^{-1}]^{-1}a^n]}\nonumber\\
 && \times e^{-\frac{1}{\sigma^2}[-a'^n[\{(I+K)_{n \times n}^{-1}+(\delta^2I_{n \times n})^{-1}\}^{-1}]^{-1}[(I+K)_{n \times n}^{-1}+(\delta^2I_{n \times n})^{-1}]^{-1}][(I+K)_{n \times n}^{-1}y^n+\gamma(\delta^2I_{n \times n})^{-1}1^n]}
 \nonumber\\
 && \times e^{-\frac{1}{\sigma^2}\{ \frac{1}{2}y'^n(I+K)_{n \times n}^{-1}y^n + \frac{1}{2}\gamma^2 1'^n(\delta^2I_{n \times n})^{-1}1^n]\}}
 \nonumber\\
 &=&  \frac{\beta^\alpha}{(2\pi)^{\frac{n}{2}+\frac{n}{2}}|(I+K)_{n \times n}|^{\frac{1}{2}}(\delta^2)^\frac{n}{2}\Gamma(\alpha)}
 \nonumber \\
 && \times \left(\frac{1}{\sigma^2}\right)^{\alpha+1+\frac{n}{2}+\frac{n}{2}} e^{-\frac{1}{\sigma^2}\{ \beta+\frac{1}{2}y'^n(I+K)_{n \times n}^{-1}y^n + \frac{1}{2}\gamma^2 1'^n(\delta^2I_{n \times n})^{-1}1^n]\}}
 \nonumber\\
  && \times e^{-\frac{1}{\sigma^2}[\frac{1}{2}a'^n[\{(I+K)_{n \times n}^{-1}+(\delta^2I_{n \times n})^{-1}\}^{-1}]^{-1}a^n} \nonumber\\
 && \times e^{-\frac{1}{\sigma^2} [- a'^n[\{(I+K)_{n \times n}^{-1}+(\delta^2I_{n \times n})^{-1}\}^{-1}]^{-1}[(I+K)_{n \times n}^{-1}+(\delta^2I_{n \times n})^{-1}]^{-1}][(I+K)_{n \times n}^{-1}y+\gamma(\delta^2I_{n \times n})^{-1}1^n]}.
 \label{exponent}
\end{eqnarray}
So, if we set
\begin{eqnarray}
\label{V}
    V_{n \times n} &=& [(I+K)_{n \times n} ^{-1}+(\delta^2I_{n \times n})^{-1}]^{-1}\\
    \mu &=& [(I+K)_{n \times n}^{-1}+(\delta^2I_{n \times n})^{-1}]^{-1}[(I+K)_{n \times n}^{-1}y^n +\gamma (\delta^2I_{n \times n})^{-1}1^n]
    \nonumber\\
      \label{mu}
    &=& V_{n \times n}[(I+K)_{n \times n}^{-1}y^n +\gamma (\delta^2I_{n \times n})^{-1}1^n]\\
    \label{betastar}
    \beta^* &=& \beta + \frac{1}{2}y'^n(I+K)_{n \times n}^{-1}y^n + \frac{1}{2}\gamma^2 1^n\delta^2I_{n \times n}]^{-1}1^n - \frac{1}{2} \mu'^n V_{n \times n}^{-1} \mu^n \\
    \alpha^* &=& n+\alpha,
    \nonumber\\
    \label{alphastar}
\end{eqnarray}
the expression in \eqref{exponent} becomes
   \begin{eqnarray*}
w(a^n,\sigma^2|y^n) &=& \frac{\beta^\alpha}{(2\pi)^n|(I+K)_{n \times n}|^{\frac{1}{2}}(\delta^2)^\frac{n}{2}\Gamma(\alpha)}
\nonumber \\
&& \times \left(\frac{1}{\sigma^2}\right)^{\alpha^*+1}e^{-\frac{1}{\sigma^2}(\beta^*+\frac{1}{2}\mu'^nV_{n \times n}^{-1}\mu^n+\frac{1}{2}a'^nV_{n \times n}^{-1}a^n-a'^nV_{n \times n}^{-1}\mu_{n \times n})} \cdot
\nonumber\\
&=& \frac{\beta^\alpha}{(2\pi)^n|I+K|^{\frac{1}{2}}(\delta^2)^\frac{n}{2}\Gamma(\alpha)}\left(\frac{1}{\sigma^2}\right)^{\alpha^*+1}e^{-\frac{1}{\sigma^2}[\frac{1}{2}(a^n-\mu^n)'V_{n \times n}^{-1}(a^n-\mu^n)]} e^{-\frac{1}{\sigma^2}\beta^*}.
\end{eqnarray*} 
The denominator in \eqref{closedpost} is given by:
\begin{eqnarray}
  m(y^n) &=& \int_{\mathbf{R}^+}^{} \int_{\mathbf{R}^n}^{}\mathcal{L}_1(a^n, \sigma^2|y^n)\times w(a^n,\sigma^2) \rm{d} a^n d\sigma^2.\nonumber\\
  &=& \int_{\mathbf{R}^+}^{} \Biggl[\int_{\mathbf{R}^n}^{} \frac{1}{(2\pi)^{\frac{n}{2}}(\sigma^2)^{\frac{n}{2}}|(I+K)_{n \times n}|^\frac{1}{2}} \times \frac{1}{(2\pi)^{\frac{n}{2}}(\sigma^2\delta^2)^\frac{n}{2}} \nonumber\\
 &&  \times e^{-\frac{1}{2\sigma^2}(y^n-a^n)^{'}(I+K)_{n \times n}^{-1}(y^n-a^n)} e^{-\frac{1}{2\sigma^2}(a^n-\gamma 1^n)^{'}(\delta^2I_{n \times n})^{-1}(a^n-\gamma 1^n)}\rm{d}a^n\Biggr] 
 \nonumber \\
&& \times \frac{\beta^\alpha}{\Gamma(\alpha)}\Bigl(\frac{1}{\sigma^2}\Bigr)^{\alpha+1}e^{-\frac{\beta}{\sigma^2}}d\sigma^2
  \nonumber\\
  &=& \int_{\mathbf{R}^+}\Biggl[\frac{1}{(2\pi)^n(\sigma^2)^n|(I+K)_{n \times n}|^\frac{1}{2}(\delta^2)^\frac{n}{2}} 
  \nonumber \\
  &&\times \int_{\mathbf{R^n}}^{} \Bigl\{e^{-\frac{1}{2\sigma^2}[y^{'n}(I+K)_{n \times n}^{-1}y-a^{'n}(I+K)_{n \times n}^{-1}y^n - y^{'n}(I+K)_{n \times n}^{-1}a^n +a^{'n}(I+K)_{n \times n}^{-1}a^n]}
  \nonumber \\
  && \times
 e^{-\frac{1}{2\sigma^2}[a^{'n}(\delta^2I)^{-1}a^n-\gamma1^{'n}(\delta^2I_{n \times n})^{-1}1^n -\gamma a^{'n}(\delta^2I_{n \times n})^{-1}1^n +\gamma^21^{'n}(\delta^2I_{n \times n})^{-1}1^n]}\Bigr\} \rm{d}a^n
 \nonumber \\
 &&\times \frac{\beta^\alpha}{\Gamma(\alpha)}\Bigl(\frac{1}{\sigma^2}\Bigr)^{\alpha+1}e^{-\frac{\beta}{\sigma^2}}\Biggr]d\sigma^2
 \nonumber \\
 &=& \int_{\mathbf{R}^+}^{}\Biggl(\frac{e^{-\frac{1}{2\sigma^2}[y^{'n}(I+K)_{n \times n}^{-1}y^n+\gamma^21^{'n}(\delta^2I_{n \times n})^{-1}1^n]}}{(2\pi)^n(\sigma^2)^n|(I+K)_{n \times n}|^\frac{1}{2}(\delta^2)^{\frac{n}{2}}} \times
\nonumber\\
&& \Biggl[\int_{\mathbf{R^n}}^{}e^{-\frac{1}{2\sigma^2}\Bigl[a^{'n}\Bigl\{(I+K)_{n \times n}^{-1}+(\delta^2I_{n \times n})^{-1}\Bigr\}a^n -2a^{'n}\Bigl\{(I+K)_{n \times n}^{-1}+(\delta^2I_{n \times n})^{-1}\Bigr\}\Bigl\{(I+K)_{n \times n}^{-1}+(\delta^2I_{n \times n})^{-1}\Bigr\}^{-1}\Bigr]}\nonumber\\
&& \times e^{-\frac{1}{2\sigma^2}\Bigl\{(I+K)_{n \times n}^{-1}y^n+\gamma(\delta^2I_{n \times n})^{-1}1^n\Bigr\}\Bigr]}\rm{d}a^n\Biggr]
\times \frac{\beta^\alpha}{\Gamma(\alpha)}\Bigl(\frac{1}{\sigma^2}\Bigr)^{\alpha+1}e^{-\frac{\beta}{\sigma^2}}\Biggr)d\sigma^2 .
\label{above1}
\end{eqnarray}

Rewriting \eqref{above1} in terms of equations \eqref{V} to \eqref{alphastar} gives
\begin{eqnarray}
\label{incomplpsq}
&=& \int_{\mathbf{R}^+}^{}\Biggl[\frac{e^{-\frac{1}{2\sigma^2}[y^{'n}(I+K)_{n \times n}^{-1}y^n+\gamma^21^{'n}(\delta^2I_{n \times n})^{-1}1^n]}}{(2\pi)^n(\sigma^2)^n|(I+K)_{n \times n}|^\frac{1}{2}(\delta^2)^{\frac{n}{2}}} 
\nonumber \\
&& \times
\Biggl(\int_{\mathbf{R^n}}^{} e^{-\frac{1}{2\sigma^2}\Bigl[a^{'}V_{n \times n}^{-1}a^n - 2a^{'n}V_{n \times n}^{-1}\mu^n\Bigr]} \rm{d}a^n\Biggr) 
\times \frac{\beta^\alpha}{\Gamma(\alpha)}\Bigl(\frac{1}{\sigma^2}\Bigr)^{\alpha+1}e^{-\frac{\beta}{\sigma^2}}\Biggr]d\sigma^2.
\nonumber\\
\end{eqnarray}
We complete the square in the inner integral (with respect to $a^n$) by multiplying and dividing \eqref{incomplpsq} by  $e^{-\frac{1}{2\sigma^2}}\mu^{'n}V_{n \times n}^{-1}\mu^n$. This gives
\begin{eqnarray}
m(y^n)&=& \int_{\mathbf{R}^+}^{}\Biggl[\frac{e^{-\frac{1}{2\sigma^2}[y^{'n}(I+K)_{n \times n}^{-1}y^n+\gamma^21^{'n}(\delta^2I_{n \times n})^{-1}1^n]}}{(2\pi)^n(\sigma^2)^n|(I+K)_{n \times n}|^\frac{1}{2}(\delta^2)^{\frac{n}{2}}} e^{\frac{1}{2\sigma^2}\mu^{'n}V_{n \times n}^{-1}\mu^n} 
\nonumber \\
&& \times \Biggl(\int_{\mathbf{R^n}}^{} e^{-\frac{1}{2\sigma^2}(a^n-\mu^n)^{'}V_{n \times n}^{-1}(a^n-\mu^n)} \rm{d}a^n\Biggr)
\frac{\beta^\alpha}{\Gamma(\alpha)}\Bigl(\frac{1}{\sigma^2}\Bigr)^{\alpha+1}e^{-\frac{\beta}{\sigma^2}}\Biggr]d\sigma^2.
\nonumber\\
\label{complsq}
\end{eqnarray}
The integral with respect to $a^n$ in \eqref{complsq} becomes $1$ if we divide and multiply \eqref{compsq} by $(2\pi)^{\frac{n}{2}}(\sigma^2)^\frac{n}{2}|V_{n \times n}|^\frac{1}{2}$, i.e.,
\begin{eqnarray}
m(y^n)&=& \int_{\mathbf{R}^+}^{}\Biggl[\frac{e^{-\frac{1}{2\sigma^2}[y^{'n}(I+K)_{n \times n}^{-1}y^n+\gamma^21^{'n}(\delta^2I_{n \times n})^{-1}1^n]}}{(2\pi)^n(\sigma^2)^n|(I+K)_{n \times n}|^\frac{1}{2}(\delta^2)^{\frac{n}{2}}} (2\pi)^{\frac{n}{2}}(\sigma^2)^\frac{n}{2}|V_{n \times n}|^\frac{1}{2} e^{\frac{1}{2\sigma^2}\mu^{'n}V_{n \times n}^{-1}\mu^n} \times
\nonumber\\
&&\Biggl(\frac{1}{(2\pi)^{\frac{n}{2}}(\sigma^2)^\frac{n}{2}|V_{n \times n}|^\frac{1}{2}}\int_{\mathbf{R}}^{} e^{-\frac{1}{2\sigma^2}(a^n-\mu^n)^{'}V_{n \times n}^{-1}(a^n-\mu^n)} \rm{d}a^n\Biggr)\times \frac{\beta^\alpha}{\Gamma(\alpha)}\Bigl(\frac{1}{\sigma^2}\Bigr)^{\alpha+1}e^{-\frac{\beta}{\sigma^2}}\Biggr]\rm{d}\sigma^2
\nonumber\\
&=&\int_{\mathbf{R}^+}^{}\Biggl[\frac{(2\pi)^{\frac{n}{2}}(\sigma^2)^\frac{n}{2}|V_{n \times n}|^\frac{1}{2}}{(2\pi)^n (\sigma^2)^n |(I+K)_{n \times n}|^{\frac{1}{2}}(\delta^2)^{\frac{n}{2}}} e^{\frac{1}{2\sigma^2}\mu^{'n}V_{n \times n}^{-1}\mu^n} e^{-\frac{1}{2\sigma^2}[y^{'n}(I+K)_{n \times n}^{-1}y^n+\gamma^21^{'n}(\delta^2I_{n \times n})^{-1}1^n]}\nonumber\\
&& \times \frac{\beta^\alpha}{\Gamma(\alpha)}\Bigl(\frac{1}{\sigma^2}\Bigr)^{\alpha+1}e^{-\frac{\beta}{\sigma^2}}\Biggr]\rm{d}\sigma^2.
\label{marginalwrtsigmasq}
\end{eqnarray}
Rearranging \eqref{marginalwrtsigmasq} gives
\begin{eqnarray}
m(y^n)&=& \frac{|V_{n \times n}|^{\frac{1}{2}}}{(2\pi \delta^2)^{\frac{n}{2}}|(I+K)_{n \times n}|^{\frac{1}{2}}} \frac{\beta^\alpha}{\Gamma(\alpha)} 
\nonumber \\
&& \times \int_{\mathbf{R}^+}^{} \Bigl(\frac{1}{\sigma^2}\Bigr)^{\alpha+\frac{n}{2}+1} e^{-\frac{1}{\sigma^2}\Bigl[\beta+\frac{1}{2}\Bigl\{ y^{'n}(I+K)_{n \times n}^{-1}y^n+\gamma^2 1^{'n}(\delta^2I_{n \times n})^{-1}1^n-\mu'V_{n \times n}^{-1}\mu^n\Bigr\}\Bigr ]} \rm{d}\sigma^2.
\label{transbetastar}
\nonumber\\
\end{eqnarray}
Recall from \eqref{betastar} and \eqref{alphastar} that: 
\begin{eqnarray*}
    \beta^* &=& \beta + \frac{1}{2}y'^n(I+K)_{n \times n}^{-1}y^n + \frac{1}{2}\gamma^2 1^n\delta^2I_{n \times n}]^{-1}1^n - \frac{1}{2} \mu'^n V_{n \times n}^{-1} \mu^n \\
    \alpha^* &=& n+\alpha
\end{eqnarray*}. Using them in \eqref{transbetastar} gives
\begin{eqnarray}
    m(y^n) &=&   \frac{|V_{n \times n}|^{\frac{1}{2}}}{(2\pi \delta^2)^{\frac{n}{2}}|(I+K)_{n \times n}|^{\frac{1}{2}}} \frac{\beta^\alpha}{\Gamma(\alpha)} \int_{\mathbf{R}^+}^{} \Bigl(\frac{1}{\sigma^2}\Bigr)^{\alpha^*-\frac{n}{2}+1} e^{-\frac{1}{\sigma^2}\beta^*} \rm{d}\sigma^2.
    \label{invgammapdf}
\end{eqnarray}
The integrand in \eqref{invgammapdf} will be the pdf of an Inverse Gamma distribution and the integral will be 1, if we multiply and divide \eqref{invgammapdf} by $\frac{\beta^{*^{\alpha^*-\frac{n}{2}}}}{\Gamma(\alpha^*-\frac{n}{2})}.$ So we have
\begin{eqnarray}
    m(y^n) &=& \frac{|V_{n \times n}|^{\frac{1}{2}}}{(2\pi \delta^2)^{\frac{n}{2}}|I+K|^{\frac{1}{2}}} \frac{\beta^\alpha}{\Gamma(\alpha)} \frac{\Gamma(\alpha^*-\frac{n}{2})}{\beta^{*^{\alpha^*-\frac{n}{2}}}}\int_{\mathbf{R}^+}^{}\frac{\beta^{*^{\alpha^*-\frac{n}{2}}}}{\Gamma(\alpha^*-\frac{n}{2})} \Bigl(\frac{1}{\sigma^2}\Bigr)^{\alpha^*-\frac{n}{2}+1} e^{-\frac{1}{\sigma^2}\beta^*} \rm{d}\sigma^2.
    \nonumber\\
    &=& \frac{|V_{n \times n}|^{\frac{1}{2}}}{(2\pi \delta^2)^{\frac{n}{2}}|I+K|^{\frac{1}{2}}} \frac{\beta^\alpha}{\Gamma(\alpha)} \frac{\Gamma(\alpha^*-\frac{n}{2})}{\beta^{*^{\alpha^*-\frac{n}{2}}}}.
    \label{marginaly}
\end{eqnarray}

Using \eqref{alphastar} in \eqref{marginaly} for $n+1$ and $n$ gives
\begin{eqnarray}
\label{cbetastar}
    \frac{m(y^{n+1})}{m(y^n)} &=& \frac{\frac{|V_{n+1 \times n+1}|^{\frac{1}{2}}}{(2\pi \delta^2)^{\frac{n+1}{2}}|(I+K)_{n+1 \times n+1}|^{\frac{1}{2}}} \frac{\beta^\alpha}{\Gamma(\alpha)} \frac{\Gamma(\alpha+\frac{n+1}{2})}{\beta_{n+1}^{*^{\alpha+\frac{n+1}{2}}}}}{\frac{|V_{n \times n}|^{\frac{1}{2}}}{(2\pi \delta^2)^{\frac{n}{2}}|(I+K)_{n \times n}|^{\frac{1}{2}}} \frac{\beta^\alpha}{\Gamma(\alpha)} \frac{\Gamma(\alpha+\frac{n}{2})}{\beta_n^{*^{\alpha+\frac{n}{2}}}}}
    \nonumber\\
    &=& \frac{|V_{n+1 \times n+1}|^\frac{1}{2}}{|V_{n \times n}|^\frac{1}{2}} \frac{|(I+K)_{n \times n}|^\frac{1}{2}}{|(I+K)_{n+1 \times n+1}|^\frac{1}{2}} \frac{(2\pi\delta^2)^\frac{n}{2}}{(2\pi\delta^2)^\frac{n+1}{2}} \frac{\frac{\Gamma(\alpha+\frac{n+1}{2})}{\Gamma(\alpha)}}{\frac{\Gamma(\alpha+\frac{n}{2})}{\Gamma(\alpha)}} \frac{\beta^\alpha}{\beta^\alpha} \frac{(\beta_{n+1}^*)^{-(\alpha+\frac{n+1}{2})}}{(\beta_{n}^*)^{-(\alpha+\frac{n}{2})}}
    \nonumber\\
    &=& c \frac{(\beta_{n+1}^*)^{-(\alpha+\frac{n+1}{2})}}{(\beta_{n}^*)^{-(\alpha+\frac{n}{2})}},
\end{eqnarray} where
\begin{eqnarray}
\label{c}
    c &=& \frac{|V_{n+1 \times n+1}|^\frac{1}{2}}{|V_{n \times n}|^\frac{1}{2}} \frac{|(I+K)_{n \times n}|^\frac{1}{2}}{|(I+K)_{n+1 \times n+1}|^\frac{1}{2}} \frac{(2\pi\delta^2)^\frac{n}{2}}{(2\pi\delta^2)^\frac{n+1}{2}} \frac{\frac{\Gamma(\alpha+\frac{n+1}{2})}{\Gamma(\alpha)}}{\frac{\Gamma(\alpha+\frac{n}{2})}{\Gamma(\alpha)}} \frac{\beta^\alpha}{\beta^\alpha}.
\end{eqnarray}
From \eqref{mu} and \eqref{betastar}, we have
\begin{eqnarray*}
    \mu^n &=& V_{n \times n}[(I+K)_{n \times n}^{-1}y_n +\gamma (\delta^2I)^{-1}1_n]\\
    \label{betastarn}
    \beta^*_n &=& \beta + \frac{1}{2}y_n'(I+K)_{n \times n}^{-1}y + \frac{1}{2}\gamma^2 1_n[\delta^2I]^{-1}1_n - \frac{1}{2} \mu^{'n} V_{n \times n}^{-1} \mu^n
\end{eqnarray*}
\begin{eqnarray}
\mu^{'n} V_{n \times n}^{-1} \mu^n &=& [V_{n \times n}\{(I+K)_{n \times n}^{-1}y^n +\gamma (\delta^2I)^{-1}1^n\}]'V^{-1}_{n \times n}[V_{n \times n}\{(I+K)_{n \times n}^{-1}y^n +\gamma (\delta^2I)^{-1}1^n\}] \nonumber\\
&=& \bigg[y'^n(I+K)_{n \times n}^{-1} + 1'^n \frac{\gamma}{\delta^2}\bigg]V_{n \times n}'V_{n \times n}^{-1}V_{n \times n}\bigg[(I+K)_{n \times n}^{-1}y^n + \frac{\gamma}{\delta^2}1^n\bigg].
\end{eqnarray} 
Since $V$ is symmetric, i.e., $V' = V$, we have
\begin{eqnarray}
   \mu^{'n} V_{n \times n}^{-1} \mu^n &=& y'^n (I+K)_{n \times n}^{-1}V_{n \times n}(I+K)_{n \times n}^{-1}y^n + 2\frac{\gamma}{\delta^2}y'^n(I+K)_{n \times n}^{-1}V_{n \times n}1^n \nonumber\\
   && + \frac{\gamma^2}{\delta^4}1'^n V_{n \times n} 1'^n.
   \label{muvmu}
\end{eqnarray}
Using \eqref{muvmu} in \eqref{betastarn}, we get
\begin{eqnarray*}
    \beta^*_n &=& \beta + \frac{1}{2}y_n'(I+K)_{n \times n}^{-1}y + \frac{1}{2}\gamma^2 1_n[\delta^2I]^{-1}1_n \nonumber\\
   && - \frac{1}{2}\bigg[y'^n (I+K)_{n \times n}^{-1}V_{n \times n}(I+K)_{n \times n}^{-1}y^n + 2\frac{\gamma}{\delta^2}y'^n(I+K)_{n \times n}^{-1}V_{n \times n}1^n 
    + \frac{\gamma^2}{\delta^4}1'^n V_{n \times n} 1'^n\bigg] 
    \nonumber\\
    &=& \beta +\frac{1}{2} y'^n[(I+K)_{n \times n}^{-1} - (I+K)_{n \times n}^{-1}V_{n \times n}(I+K)_{n \times n}^{-1}]y^{n} - \frac{\gamma}{\delta^2}y'^n(I+K)_{n \times n}^{-1}V_{n \times n}1^n \nonumber\\
   && + \frac{n}{2}\frac{\gamma^2}{\delta^2} -\frac{1}{2}\frac{\gamma^2}{\delta^4}1'^n V_{n \times n} 1^n.
\end{eqnarray*}
So, \begin{eqnarray}
\label{betastartnplus1}
   \beta^*_{n+1} &=& \beta +\frac{1}{2} y'^{n+1}[(I+K)_{n+1 \times n+1}^{-1} - (I+K)_{n+1 \times n+1}^{-1}V_{n+1 \times n+1}(I+K)_{n+1 \times n+1}^{-1}]y^{n+1} \nonumber\\
   &&  - \frac{\gamma}{\delta^2}y'^{n+1}(I+K)_{n+1 \times n+1}^{-1}V_{n+1 \times n+1}1^{n+1} + \frac{n+1}{2}\frac{\gamma^2}{\delta^2} -\frac{1}{2}\frac{\gamma^2}{\delta^4}1'^{n+1} V_{n+1 \times n+1} 1^{n+1}.\nonumber\\  
\end{eqnarray}
Define
\begin{eqnarray}
\label{gamma1}
 \Gamma_{1,n+1 \times n+1} &=& (I+K)_{n+1 \times n+1}^{-1} - (I+K)_{n+1 \times n+1}^{-1}V_{n+1 \times n+1}(I+K)_{n+1 \times n+1}^{-1}  \\
  \label{gamma2}
  \Gamma_2^{n+1} &=& \frac{\gamma}{\delta^2}y'^{n+1}(I+K)_{n+1 \times n+1}^{-1}V_{n+1 \times n+1}1^{n+1} \\
  \label{capdel}
  \text{and }\Delta &=& \frac{n+1}{2}\frac{\gamma^2}{\delta^2} -\frac{1}{2}\frac{\gamma^2}{\delta^4}1'^{n+1}V_{n+1 \times n+1}1^n .
\end{eqnarray}
Using \eqref{gamma1}, \eqref{gamma2}, and \eqref{capdel} in \eqref{betastartnplus1}, we get
\begin{eqnarray}
\label{betanshrt}
    \beta^*_{n+1} &=& \beta + \frac{1}{2} y'^{n+1}\Gamma_{1, n+1 \times n+1} y^{n+1} -y'^{n+1}\Gamma_2^{n+1} + \Delta.
\end{eqnarray}
Now, we partition $y^{n+1}$, $\Gamma_{1, n+1 \times n+1}$, and $\Gamma_2^{n+1}$.  Write
\begin{eqnarray}
\label{ygamma1}
    y'^{n+1}\Gamma_{1, n+1 \times n+1} y^{n+1} &=& \begin{pmatrix} y^n & y_{n+1}\end{pmatrix} \left(\begin{array}{ccc}
\Gamma_{1,n \times n} & \vdots &g^n_1\\
\cdots & \vdots & \cdots\\
g'^n_1 & \vdots & \gamma_1 
\end{array}\right) \begin{pmatrix}y^n\\
y_{n+1}
\end{pmatrix} \nonumber\\
&=&  y'^{n}\Gamma_{1, n \times n} y^{n} +2y'^ng_1^n y_{n+1} +y^2_{n+1}\gamma_1
\end{eqnarray}
and 
\begin{eqnarray}
\label{ygamma2}
    y'^{n+1}\Gamma_2^{n+1} &=& \begin{pmatrix} y^n & y_{n+1}\end{pmatrix}\begin{pmatrix}\Gamma_2^n\\ 
\gamma_{2}
\end{pmatrix} = y'^n \Gamma_2^n + y_{n+1}\gamma_2.
\end{eqnarray}
Using \eqref{ygamma1} and \eqref{ygamma2}
 in \eqref{betanshrt}, we get
 \begin{eqnarray}
 \label{betastarrn1g1g2d}
     \beta^*_{n+1} &=& \beta+y'^{n}\Gamma_{1, n \times n} y^{n} +2y'^ng_1^n y_{n+1} +y^2_{n+1}\gamma_1 + y'^n \Gamma_2^n + y_{n+1}\gamma_2 + \Delta. \nonumber\\
     &=& \beta + \frac{1}{2}y'^n \Gamma_{1,n \times n}y^n - y'^n \Gamma_2^n + \Delta +\frac{1}{2}\gamma_1 y^2_{n+1} -y_{n+1}(\gamma_2 - y'^ng_1^n).
 \end{eqnarray}
We complete the square again. The terms in \eqref{betastarrn1g1g2d} containing $y_{n+1}$ become   
 \begin{eqnarray}
  &&   \frac{1}{2}\gamma_1 y^2_{n+1} -y_{n+1}(\gamma_2 - y'^ng_1^n) 
  \nonumber \\
  &&
\quad = \frac{1}{2}\gamma_1 \bigg[y^2_{n+1} -2y_{n+1} \frac{\gamma_2-y'^ng_1^n}{\gamma_1}+ \bigg(\frac{\gamma_2-y'^n g_1^n}{\gamma_1}\bigg)^2\bigg] -\frac{1}{2}\frac{(\gamma_2-y'^ng_1^n)^2}{\gamma_1} \nonumber\\
     && \quad = \frac{\gamma_1}{2} \bigg[y_{n+1}-\frac{\gamma_2-y'^ng_1^n}{\gamma_1}\bigg]^2 -\frac{1}{2\gamma_1}(\gamma_2-y'^ng_1^n)^2.
 \end{eqnarray}
 For brevity, let
 \begin{eqnarray}
 \label{A1A2}
 A_1 &=& \frac{\gamma_2-y'^ng_1^n}{\gamma_1} \nonumber\\
     A_2 &=& \frac{1}{2}y'^n \Gamma_{1,n \times n}y^n - y'^n \Gamma_2^n + \Delta  -\frac{1}{2\gamma_1}(\gamma_2-y'^ng_1^n)^2.
 \end{eqnarray}
 Using \eqref{A1A2} in \eqref{betastarrn1g1g2d}, we have
 \begin{eqnarray*}
     \beta^*_{n+1} &=& \beta + \frac{\gamma_1}{2}(y_{n+1}-A_1)^2 +A_2.
 \end{eqnarray*}
 Now, since $m(y^n)$ is the marginal density of $y^n$ and, $m(y^{n+1})$ is the marginal density of $y^{n+1}$,
 \begin{eqnarray}
     \int_{\mathbb{R}}^{} \frac{m(y^{n+1})}{m(y^n)} \rm{d}y_{n+1} = 1.
 \end{eqnarray}
 From \eqref{cbetastar} we have that
 \begin{eqnarray}
 \label{csoln}
     \int_{\mathbb{R}}^{} c \times \frac{{\beta^*_{n+1}}^{-\big(\alpha+\frac{n+1}{2}\big)}}{{\beta^*_{n}}^{-\big(\alpha+\frac{n}{2}\big)}} \rm{d}y_{n+1} &=& 1 .\end{eqnarray}
     So solving for $c$ gives
     \begin{eqnarray*}
   c &=& \frac{{\beta^*_{n}}^{-\big(\alpha+\frac{n}{2}\big)}}{\int_{\mathbb{R}}^{}{\beta^*_{n+1}}^{-\big(\alpha+\frac{n+1}{2}\big)}\rm{d}y_{n+1}}.
 \end{eqnarray*}
 Using \eqref{csoln} in \eqref{cbetastar}, we have
 \begin{eqnarray}
 \label{marginalratios}
     \frac{m(y^{n+1})}{m(y^n)} &=& \frac{{\beta^*_{n}}^{-\big(\alpha+\frac{n}{2}\big)}}{\int_{\mathbb{R}}^{}{\beta^*_{n+1}}^{-\big(\alpha+\frac{n+1}{2}\big)}\rm{d}y_{n+1}} \times \frac{(\beta_{n+1}^*)^{-(\alpha+\frac{n+1}{2})}}{(\beta_{n}^*)^{-(\alpha+\frac{n}{2})}}\nonumber\\
     &=& \frac{{\beta^*_{n+1}}^{-\big(\alpha+\frac{n+1}{2}\big)}}{\int_{\mathbb{R}}^{}{\beta^*_{n+1}}^{-\big(\alpha+\frac{n+1}{2}\big)}\rm{d}y_{n+1}}.
 \end{eqnarray}
 Now, 
 \begin{eqnarray}
  \label{betastarn1power}
      {\beta^*_{n+1}}^{-(\alpha+\frac{n+1}{2})} &=& \bigg[\beta + \frac{\gamma_1}{2}(y_{n+1}-A_1)^2 +A_2\bigg]^{-(\alpha+\frac{n+1}{2})}.
 \end{eqnarray}
 Rename, $\beta^{**} = \beta + A_2 $. Then, \eqref{betastarn1power} becomes
 \begin{eqnarray}
 \label{betasnarn1power2}
      {\beta^*_{n+1}}^{-(\alpha+\frac{n+1}{2})} &=& \bigg[\beta^{**} + \frac{\gamma_1}{2}(y_{n+1}-A_1)^2\bigg]^{-(\alpha+\frac{n+1}{2})} \nonumber\\
      &=& {\beta^{**}}^{-\big(\alpha+\frac{n}{2}\big)} {\beta^{**}}^{-\frac{1}{2}} \bigg[1+\frac{\gamma_1}{2\beta^{**}}(y_{n+1}-A_1)^2\bigg]^{-\big(\alpha + \frac{n+1}{2}\big)}.
 \end{eqnarray}
 By definition, the t-density is given by
 \begin{eqnarray}
 \label{tpdf}
     St_v(\tau, \Sigma) (g) &=& \frac{\Gamma(\frac{v+d}{2})}{\Gamma(\frac{v}{2})\pi^{\frac{d}{2}}|v\Sigma|^\frac{1}{2}}\Bigl(1+\frac{(g-\tau)'\Sigma^{-1}(g-\tau)}{v}\Bigl)^{-\frac{v+d}{2}}.
 \end{eqnarray}
 So if we let
 \begin{equation}
 \label{studentstterms}
      v = 2\alpha, d = 1, \Sigma = \frac{\beta^{**}}{\frac{2\alpha+n}{2}}\frac{1}{\gamma_1}, g = y_{n+1}, \tau = A_1.
 \end{equation}
and use \eqref{studentstterms} in \eqref{tpdf}, we get
\begin{eqnarray*}
    St_{2\alpha+n}\bigg(A_1, \frac{\beta^{**}}{\frac{2\alpha+n}{2}}\bigg)(y_{n+1})
    &=& \frac{\Gamma(\frac{2\alpha+n+1}{2})}{\Gamma(\frac{2\alpha+n}{2})\pi^{\frac{1}{2}}\bigg|(2\alpha+n)\frac{\beta^{**}}{\frac{2\alpha+n}{2}}\frac{1}{\gamma_1}\bigg|^\frac{1}{2}} \nonumber\\
    &&\times \Bigl(1+\frac{(y_{n+1}-A_1)'\bigg(\frac{\beta^{**}}{\frac{2\alpha+n}{2}}\frac{1}{\gamma_1}\bigg)^{-1}(y_{n+1}-A_1)}{2\alpha+n}\Bigl)^{-\frac{2\alpha+n+1}{2}} \nonumber\\
    &=& \frac{\Gamma(\frac{2\alpha+n+1}{2})}{\Gamma(\frac{2\alpha+n}{2})} \gamma_1^{\frac{1}{2}} \frac{1}{(2\pi)^{\frac{1}{2}}} {\beta^{**}}^{-\frac{1}{2}}\bigg[1+\frac{\gamma_1}{2\beta^{**}}(y_{n+1}-A_1)^2\bigg]^{-\frac{2\alpha+n+1}{2}}.
\end{eqnarray*}
Hence, \begin{eqnarray}
\label{fntdist}
 &&   {\beta^{**}}^{-\frac{1}{2}}\bigg[1+\frac{\gamma_1}{2\beta^{**}}(y_{n+1}-A_1)^2\bigg]^{-\big(\alpha+\frac{n+1}{2}\big)} 
    \nonumber \\
    && \quad = \frac{\Gamma(\frac{2\alpha+n+1}{2})}{\Gamma(\frac{2\alpha+n}{2})}\frac{(2\pi)^{\frac{1}{2}}}{\gamma_1^{\frac{1}{2}}} \times St_{2\alpha+n}\bigg(A_1, \frac{\beta^{**}}{\frac{2\alpha+n}{2}}\bigg)(y_{n+1}).\nonumber\\
\end{eqnarray}
Using \eqref{fntdist} in \eqref{betasnarn1power2}, and \eqref{betasnarn1power2} in \eqref{marginalratios}, we have
\begin{eqnarray}
\label{marginalratios2} 
    \frac{m(y^{n+1})}{m(y^n)} &=& \frac{{\beta^{**}}^{-(\alpha+\frac{n}{2})}}{\int_{\mathbb{R}}^{}{\beta^*_{n+1}}^{-(\alpha+\frac{n+1}{2})}\rm{d}y_{n+1}} \frac{\Gamma(\frac{2\alpha+n}{2})}{\Gamma(\frac{2\alpha+n+1}{2})} \frac{(2\pi)^{\frac{1}{2}}}{(\gamma_1)^{\frac{1}{2}}} \nonumber\\
    && \times St_{2\alpha+n}\bigg(A_1, \frac{\beta^{**}}{\frac{2\alpha+n}{2}}\bigg)(y_{n+1}).
\end{eqnarray}
Since $\frac{m(y^{n+1})}{m(y^n)} = m(y_{n+1}|y^n)$ is a density,  $\int_{\mathbb{R}}^{}\frac{m(y^{n+1})}{m(y^n)}\rm{d}y_{n+1} = 1.$
Integrating the right hand side of\eqref{marginalratios2} w.r.t $y_{n+1}$ gives
that
\begin{eqnarray}
\label{intwrtyn1}
    \frac{{\beta^{**}}^{-(\alpha+\frac{n}{2})}}{\int_{\mathbb{R}}^{}{\beta^*_{n+1}}^{-(\alpha+\frac{n+1}{2})}\rm{d}y_{n+1}} \frac{\Gamma(\frac{2\alpha+n}{2})}{\Gamma(\frac{2\alpha+n+1}{2})} \frac{(2\pi)^{\frac{1}{2}}}{(\gamma_1)^{\frac{1}{2}}} 
     \int_{\mathbb{R}}^{} St_{2\alpha+n}\bigg(A_1, \frac{\beta^{**}}{\frac{2\alpha+n}{2}}\bigg)(y_{n+1})\rm{d}y_{n+1}
\end{eqnarray} 
equals 1, since $y_{n+1}$ is only in the argument of the $t$ distribution. The integral of the $t$ distribution being one means \eqref{intwrtyn1} gives
\begin{eqnarray}
\label{answrtyn1}
    \frac{{\beta^{**}}^{-(\alpha+\frac{n}{2})}}{\int_{\mathbb{R}}^{}{\beta^*_{n+1}}^{-(\alpha+\frac{n+1}{2})}\rm{d}y_{n+1}} \frac{\Gamma(\frac{2\alpha+n}{2})}{\Gamma(\frac{2\alpha+n+1}{2})} \frac{(2\pi)^{\frac{1}{2}}}{(\gamma_1)^{\frac{1}{2}}} = 1.
\end{eqnarray}
Finally using \eqref{answrtyn1} in \eqref{marginalratios2}, we get
\begin{eqnarray*}
   m(y_{n+1}|y^n) &=& St_{2\alpha+n}\bigg(A_1, \frac{\beta^{**}}{\frac{2\alpha+n}{2}}\bigg)(y_{n+1}) . \quad \square
\end{eqnarray*}
\end{proof}

\subsection{Proof of Theorem \ref{gammadelta}}

\begin{proof}
          The joint likelihood of $y^n$ and $a^n$ given $\sigma^2$, $\gamma$ and $\delta^2$ takes the form
\begin{eqnarray*}
  \mathcal{L}_2(y^n,a^n|\sigma^2,\gamma,\delta^2) &=& \frac{1}{(2\pi)^{\frac{n}{2}}(\sigma^2)^{\frac{n}{2}}|I_{n \times n}+K_{n \times n}|^\frac{1}{2}}e^{-\frac{1}{2\sigma^2}(y^n-a^n)^{'}(I_{n \times n}+K_{n \times n})^{-1}(y^n-a^n)}   
  \nonumber\\
  && \times\frac{1}{(2\pi)^{\frac{n}{2}}(\sigma^2\delta^2)^\frac{n}{2}} e^{-\frac{1}{2\sigma^2}(a^n-\gamma1^n)^{'}(\delta^2I_{n \times n})^{-1}(a^n-\gamma1^n)} .
  \nonumber\\
\end{eqnarray*}
Integrating out $a^n$ using the density of $(a^n|\sigma^2, \gamma, \delta^2)$ gives
\begin{eqnarray}
&& \mathcal{L}_3(y^n|\gamma, \delta^2, \sigma^2) 
\nonumber \\
&=& \quad \frac{1}{(2\pi)^{\frac{n}{2}}(\sigma^2)^{\frac{n}{2}}|I_{n \times n}+K_{n \times n}|^\frac{1}{2}} \frac{1}{(2\pi)^{\frac{n}{2}}(\sigma^2\delta^2)^\frac{n}{2}}
\nonumber \\
&& \times \int_{\mathbf{R}}^{} e^{-\frac{1}{2\sigma^2}(y^n-a^n)^{'}(I_{n \times n}+K_{n \times n})^{-1}(y^n-a^n)} e^{-\frac{1}{2\sigma^2}(a^n-\gamma1^n)^{'}(\delta^2I_{n \times n})^{-1}(a^n-\gamma1^n)} \rm{d}a^n.
\nonumber\\
&=& \frac{1}{(2\pi)^n(\sigma^2)^n|I_{n \times n}+K_{n \times n}|^\frac{1}{2}(\delta^2)^\frac{n}{2}} 
\nonumber\\
&&\times\int_{\mathbf{R}}^{} \Bigl\{e^{-\frac{1}{2\sigma^2}[y^{'n}(I_{n \times n}+K_{n \times n})^{-1}y-a^{'n}(I_{n \times n}+K_{n \times n})^{-1}y^n - y^{'n}(I_{n \times n}+K_{n \times n})^{-1}a^n]} 
\nonumber\\
 && \times e^{-\frac{1}{2\sigma^2}[a^{'n}(I_{n \times n}+K_{n \times n})^{-1}a^n+a^{'n}(\delta^2I_{n \times n})^{-1}a^n-\gamma1^{'}(\delta^2I_{n \times n})^{-1}1^n -\gamma a^{'n}(\delta^2I_{n \times n})^{-1}1^n +\gamma^21{'n}(\delta^2I_{n \times n})^{-1}1^n]}\Bigr\} \rm{d}a^n
\nonumber\\
&=& \frac{e^{-\frac{1}{2\sigma^2}[y^{'n}(I_{n \times n}+K_{n \times n})^{-1}y^n+\gamma^21^{'n}(\delta^2I_{n \times n})^{-1}1^n]}}{(2\pi)^n(\sigma^2)^n|I_{n \times n}+K_{n \times n}|^\frac{1}{2}(\delta^2)^{\frac{n}{2}}} 
\nonumber \\
&& \times \int_{\mathbf{R}}^{} \Biggl\{e^{-\frac{1}{2\sigma^2}a^{'n}\Bigl\{(I_{n \times n}+K_{n \times n})^{-1}+(\delta^2I_{n \times n})^{-1}\Bigr\}a^n} 
\nonumber\\ 
&&  \times e^{-\frac{1}{2\sigma^2}\Bigl[-2a^{'n}\Bigl\{(I_{n \times n}+K_{n \times n})^{-1}+(\delta^2I_{n \times n})^{-1}\Bigr\}\Bigl\{(I_{n \times n}+K_{n \times n})^{-1}+(\delta^2I_{n \times n})^{-1}\Bigr\}^{-1}\Bigr]} 
\nonumber\\
&& \times e^{-\frac{1}{2\sigma^2}\Bigl[-2a^{'n}{\Bigl\{(I_{n \times n}+K_{n \times n})^{-1}y^n+(\delta^2I_{n \times n})^{-1}1^n\Bigr\}}\Bigr]}\Biggr\} \rm{d}a^n .
\nonumber\\
\end{eqnarray}

To prove both Clause \#1 and Clause \#2 we derive a simplified form of $\mathcal{L}_3(y^n|\sigma^2, \gamma, \delta^2)$.
We start by rewriting $L_3(y^n|\sigma^2, \gamma, \delta^2)$ by substituting from the expressions for $V_{n \times n}$ and $\mu^n$ from \eqref{V} and \eqref{mu}.
That is, from \eqref{V} and \eqref{mu} we get
\begin{eqnarray}
\label{Vgd}
     V_{n \times n} &=& [(I_{n \times n}+K_{n \times n})^{-1}+(\delta^2I_{n \times n})^{-1}]^{-1}\\
    \mu^n &=& V_{n \times n}[(I_{n \times n}+K_{n \times n})^{-1}y^n +\gamma (\delta^2I_{n \times n})^{-1}1^n],
    \label{mugd}
    \nonumber\\
\end{eqnarray}
and hence
\begin{eqnarray}
\label{incompsq}
\mathcal{L}_3(y^n|\sigma^2, \gamma, \delta^2) &=& \frac{e^{-\frac{1}{2\sigma^2}[y^{'n}(I_{n \times n}+K_{n \times n})^{-1}y^n+\gamma^21^{'n}(\delta^2I_{n \times n})^{-1}1^n]}}{(2\pi)^n(\sigma^2)^n|I_{n \times n}+K_{n \times n}|^\frac{1}{2}(\delta^2)^{\frac{n}{2}}} 
\nonumber \\
&& \quad \times \int_{\mathbf{R}}^{} e^{-\frac{1}{2\sigma^2}\Bigl[a^{'n}V_{n \times n}^{-1}a^n - 2a^{'n}V_{n \times n}^{-1}\undertilde{\mu^n}\Bigr]} \rm{d}a^n .
\nonumber\\
\end{eqnarray}
We complete the square in the exponent in the integral by multiplying and dividing \eqref{incompsq} by  $e^{-\frac{1}{2\sigma^2}}\mu^{'n}V_{n \times n}^{-1}\mu^n$. This gives
\begin{eqnarray}
\mathcal{L}_3(y^n|\gamma, \delta^2, \sigma^2) &=& \frac{e^{-\frac{1}{2\sigma^2}[y^{'n}(I_{n \times n}+K_{n \times n})^{-1}y^n+\gamma^21^{'n}(\delta^2I_{n \times n})^{-1}1^n]}}{(2\pi)^n(\sigma^2)^n|I_{n \times n}+K_{n \times n}|^\frac{1}{2}(\delta^2)^{\frac{n}{2}}} e^{\frac{1}{2\sigma^2}\mu^{'n}V_{n \times n}^{-1}\undertilde{\mu^n}}
\nonumber \\
&& \quad \times \int_{\mathbf{R}}^{} e^{-\frac{1}{2\sigma^2}(a^{'n}V_{n \times n}^{-1}a^n - 2a^{'n}V_{n \times n}^{-1}\mu+\mu^{'}V_{n \times n}^{-1}\mu)} \rm{d}a^n.
\nonumber\\
&=& \frac{e^{-\frac{1}{2\sigma^2}[y'(I_{n \times n}+K_{n \times n})^{-1}y+\gamma^21'(\delta^2I_{n \times n})^{-1}1]}}{(2\pi)^n(\sigma^2)^n|I_{n \times n}+K_{n \times n}|^\frac{1}{2}(\delta^2)^{\frac{n}{2}}} e^{\frac{1}{2\sigma^2}^{'}V_{n \times n}^{-1}\mu^n} 
\nonumber \\
&& \quad \times \int_{\mathbf{R}}^{} e^{-\frac{1}{2\sigma^2}(a^n-\mu^n)^{'}V_{n \times n}^{-1}(a^n-\mu^n)} \rm{d}a^n.
\nonumber\\
\label{compsq}
\end{eqnarray}
The integral in \eqref{compsq} becomes $1$ if we divide and multiply by $(2\pi)^{\frac{n}{2}}(\sigma^2)^\frac{n}{2}|V_{n \times n}|^\frac{1}{2}$, i.e., we get
\begin{eqnarray}
\mathcal{L}_3(y^n|\gamma, \delta^2, \sigma^2)&=& \frac{e^{-\frac{1}{2\sigma^2}[y^{'n}(I+K)_{n \times n}^{-1}y^n+\gamma^21^{'n}(\delta^2I_{n \times n})^{-1}1^n]}}{(2\pi)^n(\sigma^2)^n|(I+K)_{n \times n}|^\frac{1}{2}(\delta^2)^{\frac{n}{2}}}  e^{\frac{1}{2\sigma^2}\mu^{'n}V_{n \times n}^{-1}\mu^n} (2\pi)^{\frac{n}{2}}(\sigma^2)^\frac{n}{2}|V_{n \times n}|^\frac{1}{2}
\nonumber\\
&& \times \Biggl[\frac{1}{(2\pi)^{\frac{n}{2}}(\sigma^2)^\frac{n}{2}|V_{n \times n}|^\frac{1}{2}}\int_{\mathbf{R}}^{} e^{-\frac{1}{2\sigma^2}(a^n-\mu^n)^{'}V_{n \times n}^{-1}(a^n-\mu^n)} \rm{d}a^n.\Biggr]
\nonumber\\
&=&\frac{(2\pi)^{\frac{n}{2}}(\sigma^2)^\frac{n}{2}|V_{n \times n}|^\frac{1}{2}}{(2\pi)^n (\sigma^2)^n |(I+K)_{n \times n}|^{\frac{1}{2}}(\delta^2)^{\frac{n}{2}}} 
\nonumber \\
&& \quad \times e^{\frac{1}{2\sigma^2}\mu^{'n}V_{n \times n}^{-1}\mu^n} e^{-\frac{1}{2\sigma^2}[y^{'n}(I+K)_{n \times n}^{-1}y^n+\gamma^21^{'n}(\delta^2I_{n \times n})^{-1}1^n]}.
\label{L3}
\end{eqnarray}

To prove Clause \#1, we collect the terms that depend on $\gamma$ from \eqref{L3}. Note that, the factor 
\begin{eqnarray}
  \frac{(2\pi)^{\frac{n}{2}}(\sigma^2)^\frac{n}{2}|V_{n \times n}|^\frac{1}{2}}{(2\pi)^n (\sigma^2)^n |(I+K)_{n \times n}|^{\frac{1}{2}}(\delta^2)^{\frac{n}{2}}}  e^{-\frac{1}{2\sigma^2}[y^{'n}(I+K)^{-1}y^n]}
  \label{nogamma1}
\end{eqnarray} from \eqref{L3} does not depend on $\gamma$. From Eqn \eqref{mugd}, we know that $\mu$ contains $\gamma$, so the part of \eqref{L3} that contains $\gamma$ is:
\begin{eqnarray}
&& e^{-\frac{1}{2\sigma^2}[-\mu^{'n}V_{n \times n}^{-1}\mu^n + \gamma^21{'n}(\delta^2I)^{-1}1^n]} 
\nonumber \\
&& =
 e^{-\frac{1}{2\sigma^2}\Bigl[-\Bigl\{(I+K)_{n \times n}^{-1}y^n+\gamma(\delta^2I_{n \times n})^{-1}1^n\Bigr\}^{'} V_{n \times n}^{'} V_{n \times n}^{-1} V_{n \times n} \Bigl\{(I+K)_{n \times n}^{-1}y^n+\gamma(\delta^2I_{n \times n})^{-1}1^n\Bigr\}\Bigr]} \nonumber\\
&& \quad \times  e^{-\frac{1}{2\sigma^2}\gamma^21^{'n}(\delta^2I_{n \times n})^{-1}1^n}.
\end{eqnarray}
Here $V_{n \times n}^{'} = V_{n \times n}.$ So,
\begin{eqnarray}
&& e^{-\frac{1}{2\sigma^2}[-\mu^{'n}V_{n \times n}^{-1}\mu^n + \gamma^21^{'n}(\delta^2I_{n \times n})^{-1}1^n]} 
\nonumber \\
&& \quad = e^{-\frac{1}{2\sigma^2}\Bigl[-\Bigl\{(I+K)^{-1}y+\gamma(\delta^2I_{n \times n})^{-1}1^n\Bigr\}^{'}  V_{n \times n} \Bigl\{(I+K)^{-1}y+\gamma(\delta^2I)^{-1}1^n\Bigr\} +\gamma^21{'}(\delta^2I_{n \times n})^{-1}1^n\Bigr]}.
\nonumber\\
&& \quad = e^{\frac{1}{2\sigma^2}\Bigl[y^{'n}(I+K)_{n \times n}^{-1}V_{n \times n}(I+K)_{n \times n}^{-1}y^n + \gamma1^{'n}(\delta^2I_{n \times n})^{-1}V_{n \times n}(I+K)_{n \times n}^{-1}y^n\Bigr]}  \nonumber\\
&& \quad \times e^{\frac{1}{2\sigma^2}\Bigl[\gamma y^{'n}(I+K)_{n \times n}^{-1}V_{n \times n}(\delta^2I_{n \times n})^{-1}1+\gamma^21^{'n}(\delta^2I_{n \times n})^{-2}V_{n \times n}1^n\Bigr]} e^{-\frac{1}{2\sigma^2}\Bigl[\gamma^21^{'n}(\delta^2I_{n \times n})^{-1}1^n\Bigr]}.
\label{hgamma}
\nonumber\\
\end{eqnarray}
We  name the terms containing $\gamma$ from the expression in right hand side of \eqref{hgamma} as $h(\gamma)$ .
\begin{eqnarray}
h(\gamma) = e^{-\frac{1}{2\sigma^2}\Bigl[-2\gamma y^{'n}\frac{(I+K)_{n \times n}^{-1}V^n}{\delta^2}1^n + \gamma^21^{'n}\Bigl(\frac{I_{n \times n}}{\delta^2}-\frac{V_{n \times n}}{\delta^4}\Bigr)1^n\Bigr]}
\nonumber\\
\label{fngamma}
\end{eqnarray}
The term that does not contain $\gamma$ in Eqn \eqref{hgamma} is 
\begin{eqnarray}
  e^{\frac{1}{2\sigma^2}\Bigl[y^{'n}(I+K)_{n \times n}^{-1}V_{n \times n}(I+K)_{n \times n}^{-1}y^n \Bigr]}
\nonumber\\
\label{nogamma2}
\end{eqnarray}
The product of \eqref{fngamma} that contains $\gamma$ with \eqref{nogamma1} and \eqref{nogamma2} that do not contain $\gamma$ is \eqref{statehgamma}.

To prove Clause \#2, we separate the factors with $\delta^2$ and without $\delta^2$ in \eqref{L3}. From \eqref{Vgd}, we know that $V_{n \times n}$ contains $\delta^2$. The part of \eqref{L3} that depends on $\delta^2$ is $\frac{|V_{n \times n}|}{\delta^{2^{\frac{n}{2}}}} \times$ the factor in \eqref{hgamma}. Substituting for $V_{n \times n}$ from \eqref{Vgd}, we have that $g(\delta^2)$
equals
\begin{eqnarray}
  && \frac{\Bigl|\Bigl\{(I+K)_{n \times n}^{-1}+(\delta^2I_{n \times n})^{-1}\Bigr\}\Bigr|^{\frac{1}{2}}}{(\delta^2)^{\frac{n}{2}}} \nonumber\\
  && e^{\frac{1}{2\sigma^2}\Bigl[y^{'n}(I+K)_{n \times n}^{-1}\bigl\{(I+K)_{n \times n}^{-1}+(\delta^2I_{n \times n})^{-1}\bigr\}^{-1}(I+K)_{n \times n}^{-1}y^n + \frac{2\gamma}{\delta^2} y^{'n}(I+K)_{n \times n}^{-1}\bigl\{(I+K)_{n \times n}^{-1}+(\delta^2I_{n \times n})^{-1}\bigr\}^{-1}1^n\Bigr]} 
  \nonumber\\
  && e^{-\frac{1}{2\sigma^2}\Bigl[\frac{\gamma^2}{\delta^4}1^{'n}\bigl\{(I+K)_{n \times n}^{-1}+(\delta^2I_{n \times n})^{-1}\bigr\}^{-1}1^n-\frac{\gamma^2}{\delta^2}1^{'n}1^n\Bigr]}.
  \nonumber\\
  \label{delta}
\end{eqnarray}The remaining factors in \eqref{L3} are: 
\begin{eqnarray}
  \frac{(2\pi)^{\frac{n}{2}}(\sigma^2)^\frac{n}{2}}{(2\pi)^n (\sigma^2)^n |(I+K)_{n \times n}|^{\frac{1}{2}}} e^{-\frac{1}{2\sigma^2}[y^{'n}(I+K)_{n \times n}^{-1}y^n]}
  \label{nodelta}
\end{eqnarray}. 
The product of \eqref{delta} and \eqref{nodelta} gives \eqref{stategdelta}.
      \end{proof}

\end{document}